\documentclass{article} 
\usepackage{iclr2019_conference,times}
\usepackage[utf8]{inputenc}
\usepackage[english]{babel}
\usepackage{multirow,multicol}
\usepackage{makecell}
\usepackage{tabularx}
\usepackage{mathtools}
\usepackage{algorithm}
\usepackage[noend]{algorithmic}
\usepackage{subcaption}
\usepackage{booktabs}
\usepackage{wrapfig}
\usepackage{tikz}
\usepackage[font=small]{caption}


\usepackage{amsmath,amsfonts,bm}









\def\eqref#1{equation~\ref{#1}}









\def\1{\bm{1}}
\newcommand{\train}{\mathcal{D}}







\def\vzero{{\bm{0}}}
\def\vone{{\bm{1}}}

\def\va{{\bm{a}}}
\def\vb{{\bm{b}}}
\def\vc{{\bm{c}}}
\def\vd{{\bm{d}}}
\def\ve{{\bm{e}}}

\def\vs{{\bm{s}}}
\def\vt{{\bm{t}}}
\def\vu{{\bm{u}}}
\def\vv{{\bm{v}}}

\def\vx{{\bm{x}}}

\def\vlambda{{\bm{\lambda}}}
\def\vphi{{\bm{\phi}}}
\def\vsigma{{\bm{\sigma}}}
\def\veps{{\bm{\epsilon}}}


\def\mA{{\bm{A}}}
\def\mB{{\bm{B}}}
\def\mC{{\bm{C}}}
\def\mD{{\bm{D}}}

\def\mI{{\bm{I}}}

\def\mQ{{\bm{Q}}}

\def\mU{{\bm{U}}}
\def\mV{{\bm{V}}}
\def\mW{{\bm{W}}}
\def\mX{{\bm{X}}}

\DeclareMathAlphabet{\mathsfit}{\encodingdefault}{\sfdefault}{m}{sl}
\SetMathAlphabet{\mathsfit}{bold}{\encodingdefault}{\sfdefault}{bx}{n}




\def\sR{{\mathbb{R}}}








\newcommand{\E}{\mathbb{E}}



\DeclareMathOperator*{\argmin}{arg\,min}

\DeclareMathOperator{\Tr}{Tr}

\usepackage{url}
\usepackage{amsthm}
\usepackage{amsmath}
\usepackage{nicefrac}

\usepackage{listings}

\definecolor{dkgreen}{rgb}{0,0.6,0}
\definecolor{gray}{rgb}{0.5,0.5,0.5}
\definecolor{mauve}{rgb}{0.58,0,0.82}
\definecolor{lightgray}{HTML}{DDDDDD}

\lstset{frame=tb,
  language=Python,
  showstringspaces=false,
  backgroundcolor = \color{lightgray},
  columns=flexible,
  basicstyle={\small\ttfamily},
  numbers=none,
  numberstyle=\tiny\color{gray},
  keywordstyle=\color{blue},
  commentstyle=\color{dkgreen},
  stringstyle=\color{mauve},
  breaklines=true,
  breakatwhitespace=true,
  tabsize=4
}


\newtheorem{theorem}{Theorem}
\newtheorem{lemma}[theorem]{Lemma}
\newcommand{\norm}[1]{\left\lVert#1\right\rVert}
\newcommand{\Ltr}{\mathcal{L}_{T}}
\newcommand{\Lval}{\mathcal{L}_{V}}
\newcommand{\diag}{\mathrm{diag}}

\newcommand*\circled[1]{\tikz[baseline=(char.base)]{
            \node[shape=circle,draw,inner sep=2pt] (char) {#1};}}

\usepackage{hyperref}       
\definecolor{mydarkblue}{rgb}{0,0.08,0.45}
\definecolor{myfavblue}{rgb}{0.1176, 0.392, 1.0}
\hypersetup{
    colorlinks=true,
    linkcolor=mydarkblue,
    citecolor=mydarkblue,
    filecolor=mydarkblue,
    urlcolor=mydarkblue}

\setlength{\tabcolsep}{0pt}

\newcommand{\eparam}{\textbf{w}}  
\newcommand{\eparamE}{\hat{\eparam}}  
\newcommand{\hparam}{\vlambda} 
\newcommand{\hparamScale}{\vsigma} 
\newcommand{\hnetparam}{\vphi} 
\newcommand{\rowscale}{\odot_{\mathrm{row}}} 

\title{Self-Tuning Networks:\\Bilevel Optimization of Hyperparameters using Structured Best-Response Functions}

\author{Matthew MacKay\thanks{Equal contribution.} , Paul Vicol\footnotemark[1] , Jon Lorraine, David Duvenaud, Roger Grosse \\
\texttt{\{mmackay,pvicol,lorraine,duvenaud,rgrosse\}@cs.toronto.edu} \\
University of Toronto \\
Vector Institute
}

\begin{document}

\maketitle

\begin{abstract}
Hyperparameter optimization can be formulated as a bilevel optimization problem, where the optimal parameters on the training set depend on the hyperparameters.
We aim to adapt regularization hyperparameters for neural networks by fitting compact approximations to the \textit{best-response function}, which maps hyperparameters to optimal weights and biases.
We show how to construct scalable best-response approximations for neural networks by modeling the best-response as a single network whose hidden units are gated conditionally on the regularizer.
We justify this approximation by showing the exact best-response for a shallow linear network with $L_2$-regularized Jacobian can be represented by a similar gating mechanism.
We fit this model using a gradient-based hyperparameter optimization algorithm which alternates between approximating the best-response around the current hyperparameters and optimizing the hyperparameters using the approximate best-response function.
Unlike other gradient-based approaches, we do not require differentiating the training loss with respect to the hyperparameters, allowing us to tune discrete hyperparameters, data augmentation hyperparameters, and dropout probabilities.
Because the hyperparameters are adapted online, our approach discovers \textit{hyperparameter schedules} that can outperform fixed hyperparameter values.
Empirically, our approach outperforms competing hyperparameter optimization methods on large-scale deep learning problems.
We call our networks, which update their own hyperparameters online during training, \textit{Self-Tuning Networks (STNs)}.
\end{abstract}

\section{Introduction}
Regularization hyperparameters such as weight decay, data augmentation, and dropout \citep{srivastava2014dropout} are crucial to the generalization of neural networks, but are difficult to tune.
Popular approaches to hyperparameter optimization include grid search, random search \citep{bergstra2012random}, and Bayesian optimization \citep{snoek2012practical}.
These approaches work well with low-dimensional hyperparameter spaces and ample computational resources; however, they pose hyperparameter optimization as a black-box optimization problem, ignoring structure which can be exploited for faster convergence, and require many training runs.

We can formulate hyperparameter optimization as a bilevel optimization problem.
Let $\eparam$ denote parameters (e.g.~weights and biases) and $\hparam$ denote hyperparameters (e.g.~dropout probability). Let $\Ltr$ and $\Lval$ be functions mapping parameters and hyperparameters to training and validation losses, respectively.
We aim to solve\footnote{The uniqueness of the $\argmin$ is assumed.}:
\begin{align} \label{eqn:main-problem}
    \hparam^* = \argmin_{\hparam} \Lval(\hparam, \eparam^{*}) \text{\ \ \ subject to\ \ } \eparam^{*} = \argmin_{\eparam} \Ltr(\hparam, \eparam)
\end{align}
Substituting the \textit{best-response function} $\eparam^*(\hparam) = \argmin_{\eparam} \Ltr(\hparam, \eparam)$ gives a single-level problem:
\begin{equation}\label{eqn:main-problem-single-level}
    \hparam^* = \argmin_{\hparam} \Lval(\hparam, \eparam^*(\hparam))
\end{equation} 
If the best-response $\eparam^*$ is known, the validation loss can be  minimized directly by gradient descent using Equation~\ref{eqn:main-problem-single-level}, offering dramatic speed-ups over black-box methods. 
However, as the solution to a high-dimensional optimization problem, it is difficult to compute $\eparam^*$ even approximately.

Following \cite{lorraine2018stochastic}, we propose to approximate the best-response $\eparam^*$ directly with a parametric function $\eparamE_{\hnetparam}$.
We jointly optimize $\hnetparam$ and $\hparam$, first updating $\hnetparam$ so that $\eparamE_{\hnetparam} \approx \eparam^*$ in a neighborhood around the current hyperparameters, then updating $\hparam$ by using $\eparamE_{\hnetparam}$ as a proxy for $\eparam^*$ in Eq.~\ref{eqn:main-problem-single-level}:
\begin{equation}
\hparam^*\approx \argmin_{\hparam} \Lval(\hparam, \hat\eparam_\phi(\hparam))
\end{equation}

Finding a scalable approximation $\eparamE_{\hnetparam}$ when $\eparam$ represents the weights of a neural network is a significant challenge, as even simple implementations entail significant memory overhead.
We show how to construct a compact approximation by modelling the best-response of each row in a layer's weight matrix/bias as a rank-one affine transformation of the hyperparameters.
We show that this can be interpreted as computing the activations of a base network in the usual fashion, plus a correction term dependent on the hyperparameters.
We justify this approximation by showing the exact best-response for a shallow linear network with $L_2$-regularized Jacobian follows a similar structure.
We call our proposed networks \textit{Self-Tuning Networks} (STNs) since they update their own hyperparameters online during training. 

STNs enjoy many advantages over other hyperparameter optimization methods.
First, they are easy to implement by replacing existing modules in deep learning libraries with ``hyper'' counterparts which accept an additional vector of hyperparameters as input\footnote{We illustrate how this is done for the PyTorch library \citep{paszke2017automatic} in Appendix \ref{app:code}.}.
Second, because the hyperparameters are adapted online, we ensure that computational effort expended to fit $\hnetparam$ around previous hyperparameters is not wasted.
In addition, this online adaption yields hyperparameter schedules which we find empirically to outperform fixed hyperparameter settings.
Finally, the STN training algorithm does not require differentiating the training loss with respect to the hyperparameters, unlike other gradient-based approaches \citep{maclaurin2015gradient,larsen1996design}, allowing us to tune discrete hyperparameters, such as the number of holes to cut out of an image \citep{devries2017improved}, data-augmentation hyperparameters, and discrete-noise dropout parameters.
Empirically, we evaluate the performance of STNs on large-scale deep-learning problems with the Penn Treebank~\citep{marcus1993building} and CIFAR-10 datasets~\citep{krizhevsky2009learning}, and find that they substantially outperform baseline methods.

\vspace{-2mm}
\section{Bilevel Optimization}
\vspace{-2mm}
A bilevel optimization problem consists of two sub-problems called the \textit{upper-level} and \textit{lower-level} problems, where the upper-level problem must be solved subject to optimality of the lower-level problem.
Minimax problems are an example of bilevel programs where the upper-level objective equals the negative lower-level objective.
Bilevel programs were first studied in economics to model leader/follower firm dynamics \citep{von2010market} and have since found uses in various fields (see~\cite{colson2007overview} for an overview).
In machine learning, many problems can be formulated as bilevel programs, including hyperparameter optimization, GAN training \citep{goodfellow2014generative}, meta-learning, and neural architecture search \citep{zoph2016neural}.

Even if all objectives and constraints are linear, bilevel problems are strongly NP-hard \citep{hansen1992new, vicente1994descent}.
Due to the difficulty of obtaining exact solutions, most work has focused on restricted settings, considering linear, quadratic, and convex functions.
In contrast, we focus on obtaining local solutions in the nonconvex, differentiable, and unconstrained setting.
Let $F, f: \sR^n \times \sR^m \to \sR$ denote the upper- and lower-level objectives (e.g., $\Lval$ and $\Ltr$) and $\hparam \in \sR^n, \eparam \in \sR^m$ denote the upper- and lower-level parameters.
We aim to solve:
\begin{subequations}\label{eqn:unconstrained-bilevel-program}
\begin{equation}\label{eqn:unconstrained-bilevel-upper}
\min_{\hparam \in \sR^{n}} F(\hparam, \eparam)\\
\end{equation}
\begin{equation}\label{eqn:unconstrained-bilevel-lower}
\text{subject to} \quad \eparam \in \argmin_{\eparam \in \sR^{m}} f(\hparam, \eparam) 
\end{equation}
\end{subequations}
It is desirable to design a gradient-based algorithm for solving Problem~\ref{eqn:unconstrained-bilevel-program}, since using gradient information provides drastic speed-ups over black-box optimization methods~\citep{nesterov2013introductory}.
The simplest method is simultaneous gradient descent, which updates $\hparam$ using $\nicefrac{\partial F}{\partial \hparam}$ and $\eparam$ using $\nicefrac{\partial f}{\partial \eparam}$.
However, simultaneous gradient descent often gives incorrect solutions as it fails to account for the dependence of $\eparam$ on $\hparam$.
Consider the relatively common situation where $F$ doesn't depend directly on $\hparam$ , so that $\nicefrac{\partial F}{\partial \hparam} \equiv \vzero$ and hence $\hparam$ is never updated.

\subsection{Gradient Descent via the Best-Response Function}

A more principled approach to solving Problem~\ref{eqn:unconstrained-bilevel-program} is to use the 
\textit{best-response function} \citep{gibbons1992primer}.  
Assume the lower-level Problem \ref{eqn:unconstrained-bilevel-lower} has a unique optimum $\eparam^*(\hparam)$ for each $\hparam$.
Substituting the best-response function $\eparam^*$ converts Problem \ref{eqn:unconstrained-bilevel-program} into a single-level problem:
\begin{equation}\label{eqn:single-level-best-response}
\min_{\hparam \in \sR^n} F^*(\hparam) := F(\hparam, \eparam^*(\hparam))\\
\end{equation}

If $\eparam^*$ is differentiable, we can minimize Eq.~\ref{eqn:single-level-best-response} using gradient descent on $F^*$ with respect to $\hparam$.
This method requires a unique optimum $\eparam^*(\hparam)$ for Problem \ref{eqn:unconstrained-bilevel-lower} for each $\hparam$ and differentiability of $\eparam^*$. In general, these conditions are difficult to verify.
We give sufficient conditions for them to hold in a neighborhood of a point $(\hparam_0, \eparam_0)$ where $\eparam_0$ solves Problem \ref{eqn:unconstrained-bilevel-lower} given $\hparam_0$.

\begin{lemma}\label{lemma:optimal-unique-diff}\citep{fiacco1990sensitivity}
Let $\eparam_0$ solve Problem~\ref{eqn:unconstrained-bilevel-lower} for $\hparam_0$.
Suppose $f$ is $\mathcal{C}^2$ in a neighborhood of $(\hparam_0, \eparam_0)$ and the Hessian $\nicefrac{\partial^2 f}{\partial \eparam^2}(\hparam_0, \eparam_0)$ is positive definite.
Then for some neighborhood $U$ of $\hparam_0$, there exists a continuously differentiable function $\eparam^*: U \to \sR^m$ such that $\eparam^*(\hparam)$ is the unique solution to Problem \ref{eqn:unconstrained-bilevel-lower} for each $\hparam \in U$ and $\eparam^*(\hparam_0) = \eparam_0$.
\end{lemma}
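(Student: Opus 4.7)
The plan is to recognize this as an application of the Implicit Function Theorem to the first-order optimality condition for the lower-level problem, augmented by a continuity argument to upgrade ``critical point'' to ``local minimizer.''

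First I would define $g(\hparam, \eparam) := \nicefrac{\partial f}{\partial \eparam}(\hparam, \eparam)$, which is $\mathcal{C}^1$ in a neighborhood of $(\hparam_0, \eparam_0)$ by the assumption $f \in \mathcal{C}^2$. Since $\eparam_0$ solves the unconstrained lower-level problem at $\hparam_0$, we have the interior first-order condition $g(\hparam_0, \eparam_0) = \vzero$. The Jacobian of $g$ in its second argument is $\nicefrac{\partial g}{\partial \eparam} = \nicefrac{\partial^2 f}{\partial \eparam^2}$, which at $(\hparam_0, \eparam_0)$ is positive definite by hypothesis and therefore invertible. The Implicit Function Theorem then yields a neighborhood $U$ of $\hparam_0$, a neighborhood $V$ of $\eparam_0$, and a $\mathcal{C}^1$ map $\eparam^*: U \to V$ with $\eparam^*(\hparam_0) = \eparam_0$ such that $\eparam^*(\hparam)$ is the unique point in $V$ satisfying $g(\hparam, \eparam^*(\hparam)) = \vzero$.

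Next I would show that $\eparam^*(\hparam)$ is actually a local minimizer of $f(\hparam, \cdot)$, not merely a critical point. Positive definiteness is an open condition on the space of symmetric matrices, and $\hparam \mapsto \nicefrac{\partial^2 f}{\partial \eparam^2}(\hparam, \eparam^*(\hparam))$ is continuous by the chain rule together with $f \in \mathcal{C}^2$ and $\eparam^* \in \mathcal{C}^1$. Shrinking $U$ if necessary, the Hessian remains positive definite along the graph of $\eparam^*$, so $\eparam^*(\hparam)$ is a strict local minimizer of $f(\hparam, \cdot)$ by the second-order sufficient conditions. Combined with the local uniqueness of the critical point in $V$ given by the IFT, this establishes that $\eparam^*(\hparam)$ is the unique (local) solution of Problem~\ref{eqn:unconstrained-bilevel-lower} in the neighborhood under consideration, which is the sense in which the lemma is to be interpreted.

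The main obstacle is the last step: the IFT alone only furnishes a locally unique critical point, and promoting this to a local minimizer requires the continuity argument for the Hessian. One must also be careful to shrink $U$ so that (i) the IFT applies, (ii) $\eparam^*(U) \subset V$, and (iii) the Hessian remains positive definite along the graph; all three can be arranged simultaneously by taking $U$ small enough, since each is an open condition around $\hparam_0$. No global uniqueness claim can be made from these hypotheses alone, so the statement is understood in the local sense guaranteed by the IFT.
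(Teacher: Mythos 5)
Your proposal is correct and follows essentially the same route as the paper's proof: apply the Implicit Function Theorem to the stationarity condition $\nicefrac{\partial f}{\partial \eparam} = \vzero$, then shrink the neighborhood so the Hessian stays positive definite along the graph of $\eparam^*$ and invoke second-order sufficient conditions. Your closing remark that uniqueness can only be asserted locally is a fair caveat that the paper's own write-up glosses over, but it does not change the argument.
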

\begin{proof}
See Appendix \ref{app:pf-optimal-unique-diff}.
\end{proof}
The gradient of $F^*$ decomposes into two terms, which we term the \textit{direct gradient} and the \textit{response gradient}. The direct gradient captures the direct reliance of the upper-level objective on $\hparam$, while the response gradient captures how the lower-level parameter responds to changes in the upper-level parameter:
\begin{equation}\label{eqn:gradient-decomp}
\frac{\partial F^*}{\partial \hparam}(\hparam_0) = \underbrace{\frac{\partial F}{\partial \hparam}(\hparam_0, \eparam^*(\hparam_0))}_{\text{Direct gradient}} + \underbrace{\frac{\partial F}{\partial \eparam}(\hparam_0, \eparam^*(\hparam_0))\frac{\partial \eparam^*}{\partial \hparam}(\hparam_0)}_{\text{Response gradient}}
\end{equation}
Even if $\nicefrac{\partial F}{\partial \hparam} \not\equiv 0$ and simultaneous gradient descent is possible, including the response gradient can stabilize optimization by converting the bilevel problem into a single-level one, as noted by \cite{metz2016unrolled} for GAN optimization.
Conversion to a single-level problem ensures that the gradient vector field is conservative, avoiding pathological issues described by \cite{mescheder2017numerics}.

\subsection{Approximating the Best-Response Function}\label{subsec:approximating-best-response}
In general, the solution to Problem~\ref{eqn:unconstrained-bilevel-lower} is a set, but assuming uniqueness of a solution and differentiability of $\eparam^*$ can yield fruitful algorithms in practice.
In fact, gradient-based hyperparameter optimization methods can often be interpreted as approximating either the best-response $\eparam^*$ or its Jacobian $\nicefrac{\partial \eparam^*}{\partial \hparam}$, as detailed in Section \ref{sec:related-work}.
However, these approaches can be computationally expensive and often struggle with discrete hyperparameters and stochastic hyperparameters like dropout probabilities, since they require differentiating the training loss with respect to the hyperparameters.
Promising approaches to approximate $\eparam^*$ directly were proposed by \cite{lorraine2018stochastic}, and are detailed below.

\textbf{1. Global Approximation.}
The first algorithm proposed by \cite{lorraine2018stochastic} approximates $\eparam^*$ as a differentiable function $\eparamE_{\hnetparam}$ with parameters $\hnetparam$.
If $\eparam$ represents neural net weights, then the mapping $\eparamE_{\hnetparam}$ is a hypernetwork \citep{schmidhuber1992learning,ha2016hypernetworks}.
If the distribution $p(\hparam)$ is fixed, then gradient descent with respect to $\hnetparam$ minimizes:
\begin{equation}\label{eqn:lower-approx-distbn}
\E_{\hparam \sim p(\hparam)}\left[f(\hparam, \eparamE_{\hnetparam}(\hparam))\right]
\end{equation} 
If $\text{support}(p)$ is broad and $\eparamE_{\hnetparam}$ is sufficiently flexible, then $\eparamE_{\hnetparam}$ can be used as a proxy for $\eparam^*$ in Problem \ref{eqn:single-level-best-response}, resulting in the following objective:
\begin{equation}\label{eqn:single-level-approximate-response}
\min_{\hparam \in \sR^n} F(\hparam, \eparamE_{\hnetparam}(\hparam))
\end{equation}
\textbf{2. Local Approximation.}
In practice, $\eparamE_{\hnetparam}$ is usually insufficiently flexible to model $\eparam^*$ on $\text{support}(p)$.
The second algorithm of \cite{lorraine2018stochastic} locally approximates $\eparam^*$ in a neighborhood around the current upper-level parameter $\hparam$.
They set $p(\veps|\hparamScale)$ to a factorized Gaussian noise distribution with a fixed scale parameter $\hparamScale \in \sR^n_+$, and found $\hnetparam$ by minimizing the objective:
\begin{equation}\label{eqn:lower-perturb-distbn}
\E_{\veps \sim p(\veps|\hparamScale)}\left[f(\hparam + \veps, \eparamE_{\hnetparam}(\hparam + \veps))\right]
\end{equation}
Intuitively, the upper-level parameter $\hparam$ is perturbed by a small amount, so the lower-level parameter learns how to respond.
An alternating gradient descent scheme is used, where $\hnetparam$ is updated to minimize \eqref{eqn:lower-perturb-distbn} and $\hparam$ is updated to minimize \eqref{eqn:single-level-approximate-response}.
This approach worked for problems using $L_2$ regularization on MNIST \citep{lecun1998gradient}.
However, it is unclear if the approach works with different regularizers or scales to larger problems.
It requires $\eparamE_{\hnetparam}$, which is \textit{a priori} unwieldy for high dimensional $\eparam$.
It is also unclear how to set $\hparamScale$, which defines the size of the neighborhood on which $\hnetparam$ is trained, or if the approach can be adapted to discrete and stochastic hyperparameters.

\section{Self-Tuning Networks}

In this section, we first construct a best-response approximation $\eparamE_{\hnetparam}$ that is memory efficient and scales to large neural networks. 
We justify this approximation through analysis of simpler situations. 
Then, we describe a method to automatically adjust the scale of the neighborhood $\hnetparam$ is trained on.
Finally, we formally describe our algorithm and discuss how it easily handles discrete and stochastic hyperparameters.
We call the resulting networks, which update their own hyperparameters online during training, Self-Tuning Networks (STNs).

\subsection{An Efficient Best-Response Approximation for Neural Networks}

We propose to approximate the best-response for a given layer's weight matrix $\mW \in \sR^{D_{out} \times D_{in}}$ and bias $\vb \in \sR^{D_{out}}$ as an affine transformation of the hyperparameters $\hparam$\footnote{We describe modifications for convolutional filters in Appendix \ref{app:best-response-cnn-rnn}.}:
\begin{equation}\label{eqn:fc-best-response-approximation}
\hat{\mW}_{\hnetparam}(\hparam) = \mW_{\text{elem}} + (\mV \hparam) \rowscale \mW_{\text{hyper}}, \quad \hat{\vb}_{\hnetparam}(\hparam) = \vb_{\text{elem}} + (\mC \hparam) \odot \vb_{\text{hyper}}
\end{equation}
Here, $\odot$ indicates elementwise multiplication and $\rowscale$ indicates row-wise rescaling. This architecture computes the usual elementary weight/bias, plus an additional weight/bias which has been scaled by a linear transformation of the hyperparameters.
Alternatively, it can be interpreted as directly operating on the pre-activations of the layer, adding a correction to the usual pre-activation to account for the hyperparameters:
\begin{equation}\label{eqn:activation-transform}
\hat{\mW}_{\hnetparam}(\hparam)\vx + \hat{\vb}_{\hnetparam}(\hparam) = \left[\mW_{\text{elem}} \vx + \vb_{\text{elem}}\right] + \left[(\mV \hparam) \odot (\mW_{\text{hyper}} \vx) + (\mC \hparam) \odot \vb_{\text{hyper}}\right]
\end{equation}

This best-response architecture is tractable to compute and memory-efficient: it requires $D_{out}(2D_{in} + n)$ parameters to represent $\hat{\mW}_{\hnetparam}$ and $D_{out}(2 + n)$ parameters to represent $\hat{\vb}_{\hnetparam}$, where $n$ is the number of hyperparameters.
Furthermore, it enables parallelism: since the predictions can be computed by transforming the pre-activations (Equation~\ref{eqn:activation-transform}), the hyperparameters for different examples in a batch can be perturbed independently, improving sample efficiency.
In practice, the approximation can be implemented by simply replacing existing modules in deep learning libraries with ``hyper'' counterparts which accept an additional vector of hyperparameters as input\footnote{We illustrate how this is done for the PyTorch library \citep{paszke2017automatic} in Appendix \ref{app:code}.}.

\subsection{Exact Best-Response for Two-Layer Linear Networks}

Given that the best-response function is a mapping from $\sR^{n}$ to the high-dimensional weight space $\sR^{m}$, why should we expect to be able to represent it compactly?  And why in particular would \eqref{eqn:fc-best-response-approximation} be a reasonable approximation?  In this section, we exhibit a model whose best-response function can be represented exactly using a minor variant of \eqref{eqn:fc-best-response-approximation}: a linear network with Jacobian norm regularization. In particular, the best-response takes the form of a network whose hidden units are modulated conditionally on the hyperparameters.

Consider using a 2-layer linear network with weights $\eparam = (\mQ, \vs) \in \sR^{D \times D} \times \sR^{D}$ to predict targets $t \in \sR$ from inputs $\vx \in \sR^D$:
\vspace{-1mm}
\begin{equation}
\va(\vx; \eparam) = \mQ \vx, \quad y(\vx; \eparam) = \vs^\top \va(\vx; \eparam)
\end{equation} 
Suppose we use a squared-error loss regularized with an $L_2$ penalty on the Jacobian $\nicefrac{\partial y}{\partial \vx}$, where the penalty weight $\lambda$ lies in $\sR$ and is mapped using $\exp$ to lie $\sR_{+}$ :
\begin{equation}\label{eqn:linear-regression-l2-regularized-jacobian}
\Ltr(\lambda, \eparam) = \sum_{(\vx, t) \in \train} (y(\vx; \eparam) - t)^2 + \frac{1}{|\train|}\exp(\lambda) \norm{\frac{\partial y}{\partial \vx}(\vx; \eparam)}^2
\end{equation}
\begin{theorem}\label{lemma:linear-regression-hidden-rescaling}
Let $\eparam_0 = (\mQ_0, \vs_0)$, where $\mQ_0$ is the change-of-basis matrix to the principal components of the data matrix and $\vs_0$ solves the unregularized version of Problem \ref{eqn:linear-regression-l2-regularized-jacobian} given $\mQ_0$.
Then there exist $\vv, \vc \in \sR^D$ such that the best-response function\footnote{This is an abuse of notation since there is not a unique solution to Problem \ref{eqn:linear-regression-l2-regularized-jacobian} for each $\lambda$ in general.} $\eparam^*(\lambda) = (\mQ^*(\lambda), \vs^*(\lambda))$ is:
\begin{equation*}
\mQ^*(\lambda) = \sigma(\lambda \vv + \vc) \rowscale \mQ_0, \quad \vs^*(\lambda) = \vs_0,
\end{equation*} 
where $\sigma$ is the sigmoid function.
\end{theorem}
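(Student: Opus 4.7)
The plan is to exploit the fact that both terms of $\Ltr(\lambda,\cdot)$ depend on $\eparam=(\mQ,\vs)$ only through the composite vector $\vu := \mQ^\top\vs$: indeed $y(\vx;\eparam)=\vs^\top\mQ\vx=\vu^\top\vx$ and $\partial y/\partial\vx=\mQ^\top\vs=\vu$, so the regularized training loss collapses to a standard ridge-regression objective in $\vu$,
\begin{equation*}
\sum_{(\vx,t)\in\train}(\vu^\top\vx - t)^2 \;+\; \frac{\exp(\lambda)}{|\train|}\|\vu\|^2,
\end{equation*}
which is strictly convex with a unique minimizer $\vu^*(\lambda)$. The theorem therefore reduces to exhibiting $\vv,\vc\in\sR^D$ with $\bigl(\sigma(\lambda\vv+\vc)\rowscale\mQ_0\bigr)^\top\vs_0 = \vu^*(\lambda)$.

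To compute $\vu^*$ I would pass to principal-component coordinates. Since $\mQ_0$ is the change-of-basis matrix to the principal components, $\mQ_0$ is orthogonal and, writing $\vz:=\mQ_0\vx$, the empirical second-moment matrix is diagonal: $\sum_\vx \vz\vz^\top = |\train|\,\mLambda$ with $\mLambda=\diag(\mu_1,\dots,\mu_D)$ the principal variances. Setting $\tilde{\vu}:=\mQ_0\vu$, orthogonality gives $\|\vu\|^2=\|\tilde{\vu}\|^2$, and the objective decouples coordinate-wise, producing the closed form
\begin{equation*}
\tilde u_i^*(\lambda) \;=\; \frac{\tilde c_i}{|\train|\mu_i + \exp(\lambda)/|\train|}, \qquad \tilde c_i := \textstyle\sum_{(\vx,t)\in\train} z_i\, t.
\end{equation*}
The same normal equations, with the penalty switched off, give the unregularized coordinate-wise solution $\tilde c_i/(|\train|\mu_i)$; but when $\mQ=\mQ_0$ we have $\tilde{\vu}=\mQ_0\mQ_0^\top\vs=\vs$, so in fact $\vs_{0,i}=\tilde c_i/(|\train|\mu_i)$. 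Substituting back yields a sigmoidal shrinkage,
\begin{equation*}
\tilde u_i^*(\lambda)=\vs_{0,i}\cdot\frac{1}{1 + \exp(\lambda)/(|\train|^2\mu_i)}=\vs_{0,i}\cdot\sigma\bigl(\log(|\train|^2\mu_i)-\lambda\bigr),
\end{equation*}
so choosing $v_i=-1$ and $c_i=\log(|\train|^2\mu_i)$ makes $\tilde{\vu}^*=\sigma(\lambda\vv+\vc)\odot\vs_0$. Transforming back via $\vu^*=\mQ_0^\top\tilde{\vu}^*$ then gives $\vu^*=\mQ_0^\top\diag(\sigma(\lambda\vv+\vc))\vs_0=\bigl(\sigma(\lambda\vv+\vc)\rowscale\mQ_0\bigr)^\top\vs_0$, which is what was required.

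The hard part is conceptual rather than computational: because $\Ltr$ sees $(\mQ,\vs)$ only through the product $\vu=\mQ^\top\vs$, the set of best responses is a high-dimensional affine manifold (hence the non-uniqueness flagged in the footnote), so the theorem is really an \emph{existence} statement---one must pick a canonical representative of this manifold and verify that the proposed row-rescaling ansatz lies inside it. Beyond that, the only side conditions needed are orthogonality of $\mQ_0$ (which makes the data moment matrix diagonal in PCA coordinates and preserves the $L^2$ penalty) and non-degeneracy $\mu_i>0$ (so that $\vs_{0,i}$ and the logarithms in $\vc$ are well defined); both follow from standard PCA assumptions on the data.
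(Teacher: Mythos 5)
Your proposal is correct and follows essentially the same route as the paper's proof: reduce the loss to ridge regression in $\vu=\mQ^\top\vs$, diagonalize via the principal components, and observe that the ridge shrinkage factors are sigmoidal in $\lambda$, so that $\vv=-\vone$ and a suitable $\vc$ (log of the per-component spectrum) realize the claimed form. The only cosmetic differences are that you work coordinate-wise in the rotated basis rather than verifying the matrix identity via the SVD closed form, and your $c_i$ differs from the paper's $2\log d_i$ by an additive $\log|\train|$ owing to whether the Jacobian penalty is read as summed over the training set --- which does not affect the existence statement.
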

\begin{proof}
See Appendix \ref{app:pf-linear-regression-hidden-rescaling}.
\end{proof}
Observe that $y(\vx; \eparam^*(\lambda))$ can be implemented as a regular network with weights $\eparam_0 = (\mQ_0, \vs_0)$ with an additional sigmoidal gating of its hidden units $\va(\vx; \eparam^*(\lambda))$:
\begin{equation}
\va(\vx; \eparam^*(\lambda)) = \mQ^*(\lambda)\vx = \sigma(\lambda \vv + \vc) \rowscale (\mQ_0 \vx) = \sigma(\lambda \vv + \vc) \rowscale \va(\vx; \eparam_0)
\end{equation}
This architecture is shown in Figure~\ref{fig:optimal-two-layer-arch}. Inspired by this example, we use a similar gating of the hidden units to approximate the best-response for deep, nonlinear networks.

\subsection{Linear Best-Response Approximations}

The sigmoidal gating architecture of the preceding section can be further simplified if one only needs to approximate the best-response function for a small range of hyperparameter values. In particular, for a narrow enough hyperparameter distribution, a smooth best-response function can be approximated by an affine function (i.e.~its first-order Taylor approximation). Hence, we replace the sigmoidal gating with linear gating, in order that the weights be affine in the hyperparameters. The following theorem shows that, for quadratic lower-level objectives, using an affine approximation to the best-response function and minimizing $\E_{\veps \sim p(\veps|\hparamScale)}\left[f(\hparam + \veps, \eparamE_{\hnetparam}(\hparam + \veps))\right]$ yields the correct best-response Jacobian, thus ensuring gradient descent on the approximate objective $F(\hparam, \eparamE_{\hnetparam}(\hparam))$ converges to a local optimum:

\begin{theorem}\label{theorem:suff-linear-approx}
Suppose $f$ is quadratic with $\nicefrac{\partial^2 f}{\partial \eparam^2} \succ \vzero$, $p(\veps|\sigma)$ is Gaussian with mean $\vzero$ and variance $\sigma^2\mI$, and $\eparamE_{\hnetparam}$ is affine. Fix $\hparam_0 \in \sR^{n}$ and let $\hnetparam^* = \argmin_{\hnetparam} \E_{\veps \sim p(\veps|\hparamScale)}\left[f(\hparam + \veps, \eparamE_{\hnetparam}(\hparam + \veps))\right]$.
Then we have $\nicefrac{\eparamE_{\hnetparam}}{\partial \hparam}(\hparam_0) = \nicefrac{\partial \eparam^*}{\partial \hparam}(\hparam_0)$.
\end{theorem}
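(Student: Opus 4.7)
The key structural fact to exploit is that when $f$ is quadratic and $\nicefrac{\partial^2 f}{\partial\eparam^2}\succ\vzero$, the best-response $\eparam^*(\hparam)=\argmin_{\eparam} f(\hparam,\eparam)$ is itself an \emph{affine} function of $\hparam$, and therefore already lies in the parametric class over which $\eparamE_{\hnetparam}$ is being optimized. Writing $f$ in canonical form with block Hessian components $\mH_{\hparam\hparam}$, $\mH_{\hparam\eparam}=\mH_{\eparam\hparam}^{\top}$, $\mH_{\eparam\eparam}\succ\vzero$, the first-order condition $\nabla_{\eparam} f=\vzero$ is linear in $(\hparam,\eparam)$ and solves to $\eparam^*(\hparam) = -\mH_{\eparam\eparam}^{-1}(\mH_{\eparam\hparam}\hparam + \vc_{\eparam})$, with constant Jacobian $\nicefrac{\partial\eparam^*}{\partial\hparam}(\hparam_0) = -\mH_{\eparam\eparam}^{-1}\mH_{\eparam\hparam}$.

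Given this, I plan to finish via a pointwise minimization argument. Because $\mH_{\eparam\eparam}\succ\vzero$, for every $\hparam$ the map $\eparam\mapsto f(\hparam,\eparam)$ has $\eparam^*(\hparam)$ as its \emph{unique} minimizer. Hence for any affine $\eparamE_{\hnetparam}$ and any perturbation $\veps$,
\[
f\bigl(\hparam_0+\veps,\,\eparam^*(\hparam_0+\veps)\bigr)\;\le\;f\bigl(\hparam_0+\veps,\,\eparamE_{\hnetparam}(\hparam_0+\veps)\bigr),
\]
with equality iff $\eparamE_{\hnetparam}(\hparam_0+\veps)=\eparam^*(\hparam_0+\veps)$. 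Taking expectations under $p(\veps|\sigma)=\gN(\vzero,\sigma^2\mI)$ shows that $\eparamE_{\hnetparam}=\eparam^*$ attains the infimum inside the affine class, and that any minimizer must agree with $\eparam^*$ Lebesgue-a.e.\ and, being affine, everywhere. Consequently $\eparamE_{\hnetparam^*}\equiv\eparam^*$ and in particular $\nicefrac{\partial\eparamE_{\hnetparam^*}}{\partial\hparam}(\hparam_0) = \nicefrac{\partial\eparam^*}{\partial\hparam}(\hparam_0)$, as required.

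The one delicate step is establishing uniqueness of the affine minimizer. This follows from strict convexity of $f(\hparam,\cdot)$: the gap in the pointwise inequality above equals $\tfrac{1}{2}\bigl(\eparamE_{\hnetparam}(\hparam_0+\veps)-\eparam^*(\hparam_0+\veps)\bigr)^{\top}\mH_{\eparam\eparam}\bigl(\eparamE_{\hnetparam}(\hparam_0+\veps)-\eparam^*(\hparam_0+\veps)\bigr)$, which is strictly positive whenever the difference is nonzero, and two distinct affine functions of $\veps$ differ on a set of full Gaussian measure. As a fallback that circumvents this measure-theoretic step, one can verify the claim by direct computation: substitute $\eparamE_{\hnetparam}(\hparam)=\mA\hparam+\vb$, evaluate $\E_{\veps}[f(\hparam_0+\veps,\mA(\hparam_0+\veps)+\vb)]$ using the identity $\E[g(\veps)]=g(\vzero)+\tfrac{\sigma^2}{2}\Tr(\nabla^2 g)$ for quadratic $g$, and set the stationarity conditions in $\vb$ and $\mA$ to zero. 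The $\vb$-equation gives $\mA\hparam_0+\vb=\eparam^*(\hparam_0)$, which kills a cross-term in the $\mA$-equation and leaves $\mH_{\eparam\hparam}+\mH_{\eparam\eparam}\mA=\vzero$, i.e.\ $\mA=-\mH_{\eparam\eparam}^{-1}\mH_{\eparam\hparam}=\nicefrac{\partial\eparam^*}{\partial\hparam}(\hparam_0)$.
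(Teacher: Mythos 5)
Your proposal is correct, and its main argument takes a genuinely different route from the paper's. The paper proceeds by brute force: it writes $f$ in block-quadratic form, expands $\E_{\veps}\left[f(\hparam_0+\veps,\mU(\hparam_0+\veps)+\vb)\right]$ term by term using $\E[\veps]=\vzero$ and $\E[\veps\veps^\top]=\sigma^2\mI$, and solves the stationarity conditions in $\vb$ and $\mU$ to recover $\mU=-\mC^{-1}\mB^\top=\nicefrac{\partial\eparam^*}{\partial\hparam}$. Your primary argument instead observes that the exact best-response $\eparam^*(\hparam)=-\mC^{-1}(\mB^\top\hparam+\ve)$ is itself affine, hence already a member of the hypothesis class, and that the exactness of the second-order Taylor expansion of a quadratic gives the pointwise identity
\begin{equation*}
f\bigl(\hparam,\eparamE_{\hnetparam}(\hparam)\bigr)-f\bigl(\hparam,\eparam^*(\hparam)\bigr)=\tfrac{1}{2}\bigl(\eparamE_{\hnetparam}(\hparam)-\eparam^*(\hparam)\bigr)^\top\mC\bigl(\eparamE_{\hnetparam}(\hparam)-\eparam^*(\hparam)\bigr)\ \ge\ 0,
\end{equation*}
so that after taking expectations the unique affine minimizer is $\eparamE_{\hnetparam^*}\equiv\eparam^*$ (uniqueness because two distinct affine maps agree only on a Gaussian-null set). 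This is cleaner and slightly stronger: it yields $\eparamE_{\hnetparam^*}=\eparam^*$ globally for free, works for any perturbation distribution with full support and finite second moments (the isotropic Gaussian assumption is never used), and makes transparent why the theorem is really a statement about realizability of the target in the hypothesis class. The paper's calculation, by contrast, is what generalizes when $\eparam^*$ is \emph{not} affine and one wants to see exactly which moments of $p(\veps|\hparamScale)$ enter; it also produces the explicit intermediate formulas for $\vb$ and $\mU$ reused in the surrounding discussion. Your fallback computation via $\E[g(\veps)]=g(\vzero)+\tfrac{\sigma^2}{2}\Tr(\nabla^2 g)$ is essentially the paper's proof in compressed form, so you have both arguments available; either one establishes the claim.
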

\begin{proof}
See Appendix \ref{app:pf-suff-linear-approx}.
\end{proof}

\begin{figure}[t]
\vspace{-2mm}
\begin{center}
\includegraphics[width=0.7\textwidth]{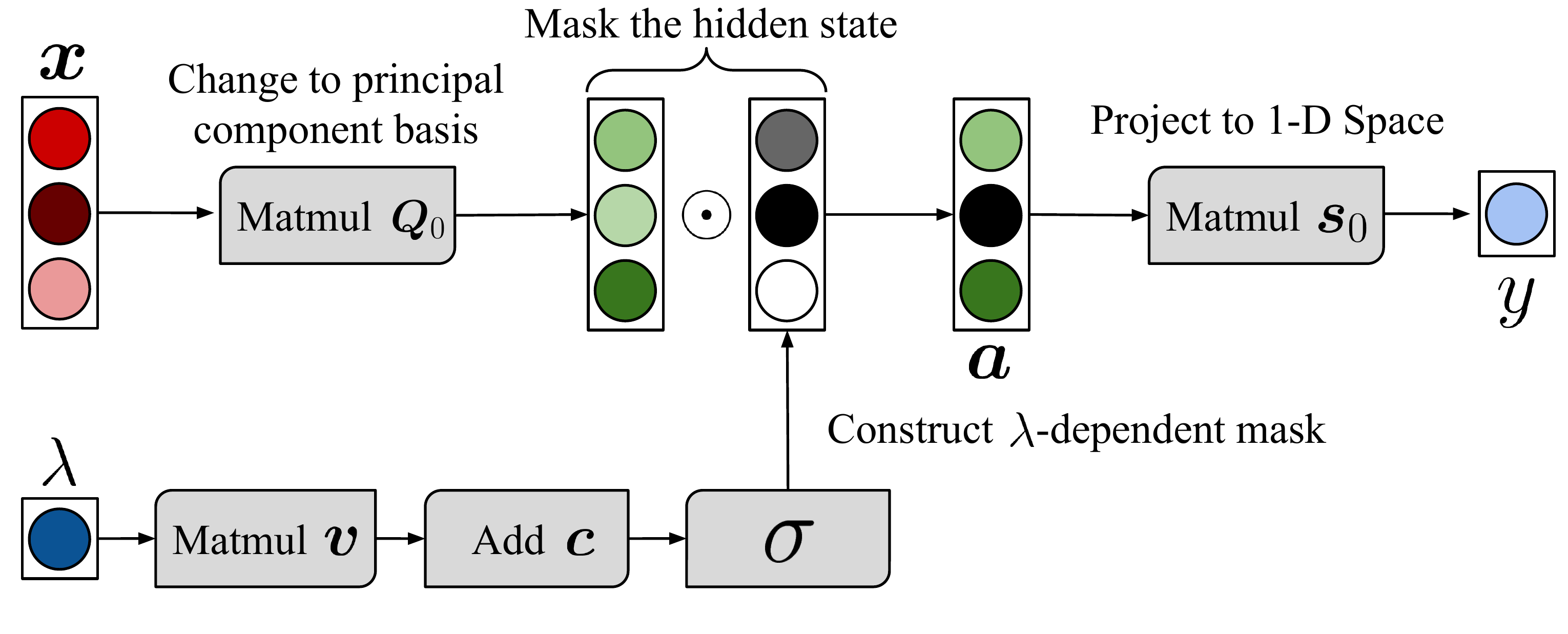}
\end{center}
\caption{Best-response architecture for an $L_2$-Jacobian regularized two-layer linear network.}
\label{fig:optimal-two-layer-arch}
\end{figure}

\subsection{Adapting the Hyperparameter Distribution}

\begin{figure}[t]\label{fig:perturb_scale}
\vspace{-0.05\textheight}
\begin{center}
\includegraphics[width=\textwidth]{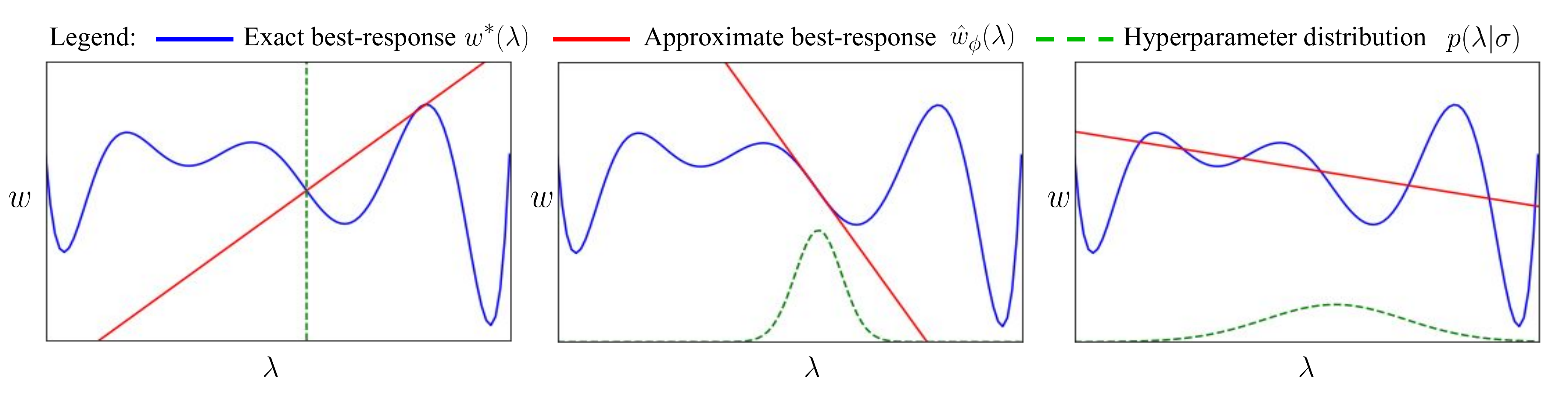}
\end{center}
\vspace{-2mm}
\caption{\textbf{The effect of the sampled neighborhood.} \textbf{Left:} If the sampled neighborhood is too small (e.g., a point mass) the approximation learned will only match the exact best-response at the current hyperparameter, with no guarantee that its gradient matches that of the best-response. \textbf{Middle:} If the sampled neighborhood is not too small or too wide, the gradient of the approximation will match that of the best-response. \textbf{Right:} If the sampled neighborhood is too wide, the approximation will be insufficiently flexible to model the best-response, and again the gradients will not match.}
\label{fig:perturb-scale}
\vspace{-6mm}
\end{figure}

The entries of $\hparamScale$ control the scale of the hyperparameter distribution on which $\hnetparam$ is trained.
If the entries are too large, then $\eparamE_{\hnetparam}$ will not be flexible enough to capture the best-response over the samples.
However, the entries must remain large enough to force $\eparamE_{\hnetparam}$ to capture the shape locally around the current hyperparameter values.
We illustrate this in Figure \ref{fig:perturb-scale}.
As the smoothness of the loss landscape changes during training, it may be beneficial to vary $\hparamScale$.

To address these issues, we propose adjusting $\hparamScale$ during training based on the sensitivity of the upper-level objective to the sampled hyperparameters.
We include an entropy term weighted by $\tau \in \sR_{+}$ which acts to enlarge the entries of $\vsigma$. 
The resulting objective is:
\begin{equation}\label{eqn:single-level-vo}
\E_{\veps \sim p(\veps|\hparamScale)}[F(\hparam + \veps, \eparamE_{\hnetparam}(\hparam + \veps))] - \tau\mathbb{H}[p(\veps|\hparamScale)]
\end{equation}
This is similar to a variational inference objective, where the first term is analogous to the negative log-likelihood, but $\tau \neq 1$.
As $\tau$ ranges from $0$ to $1$, our objective interpolates between variational optimization \citep{staines2012variational} and variational inference, as noted by \cite{khan2018fast}.
Similar objectives have been used in the variational inference literature for better training \citep{blundell2015weight} and representation learning \citep{higgins2016beta}.

Minimizing the first term on its own eventually moves all probability mass towards an optimum $\hparam^*$, resulting in $\hparamScale=\vzero$ if $\hparam^*$ is an isolated local minimum.
This compels $\hparamScale$ to balance between shrinking to decrease the first term while remaining sufficiently large to avoid a heavy entropy penalty.
When benchmarking our algorithm's performance, we evaluate $F(\hparam, \eparamE_{\hnetparam}(\hparam))$ at the deterministic current hyperparameter $\hparam_0$. (This is a common practice when using stochastic operations during training, such as batch normalization or dropout.)

\subsection{Training Algorithm}\label{subsec:hyperparam-opt-as-bilevel}

We now describe the complete STN training algorithm and discuss how it can tune hyperparameters that other gradient-based algorithms cannot, such as discrete or stochastic hyperparameters.
We use an unconstrained parametrization $\hparam \in \sR^n$ of the hyperparameters.
Let $r$ denote the element-wise function which maps $\hparam$ to the appropriate constrained space, which will involve a non-differentiable discretization for discrete hyperparameters.

Let $\Ltr$ and $\Lval$ denote training and validation losses which are (possibly stochastic, e.g., if using dropout) functions of the hyperparameters and parameters.
Define functions $f, F$ by $f(\hparam, \eparam) = \Ltr(r(\hparam), \eparam)$ and $F(\hparam, \eparam) = \Lval(r(\hparam), \eparam)$.
STNs are trained by a gradient descent scheme which alternates between updating $\hnetparam$ for $T_{train}$ steps to minimize $\E_{\veps \sim p(\veps|\hparamScale)}\left[f(\hparam + \veps, \eparamE_{\hnetparam}(\hparam + \veps))\right]$ (Eq.~\ref{eqn:lower-perturb-distbn}) and updating $\hparam$ and $\hparamScale$ for $T_{valid}$ steps to minimize $\E_{\veps \sim p(\veps|\hparamScale)}[F(\hparam + \veps, \eparamE_{\hnetparam}(\hparam + \veps))] - \tau\mathbb{H}[p(\veps|\hparamScale)]$ (Eq.~\ref{eqn:single-level-vo}).
We give our complete algorithm as Algorithm~\ref{alg:stnn-training} and show how it can be implemented in code in Appendix~\ref{app:code}.
The possible non-differentiability of $r$ due to discrete hyperparameters poses no problem.
To estimate the derivative of $\E_{\veps \sim p(\veps|\hparamScale)}\left[f(\hparam + \veps, \eparamE_{\hnetparam}(\hparam + \veps))\right]$ with respect to $\hnetparam$, we can use the reparametrization trick and compute $\nicefrac{\partial f}{\partial \eparam}$ and $\nicefrac{\partial \eparamE_{\hnetparam}}{\partial \hnetparam}$, neither of whose computation paths involve the discretization $r$.
To differentiate $\E_{\veps \sim p(\veps|\hparamScale)}[F(\hparam + \veps, \eparamE_{\hnetparam}(\hparam + \veps))] - \tau\mathbb{H}[p(\veps|\hparamScale)]$ with respect to a discrete hyperparameter $\lambda_i$, there are two cases we must consider:
\begin{wrapfigure}{L}{0.62\textwidth}
    \vspace{-4.5mm}
    \begin{minipage}{0.62\textwidth}
\begin{algorithm}[H]
    \caption{STN Training Algorithm}
    \label{alg:stnn-training} 
    \begin{algorithmic}
        \STATE \textbf{Initialize:} Best-response approximation parameters $\hnetparam$, hyperparameters $\hparam$, learning rates $\{\alpha_i\}_{i=1}^3$
    	\WHILE{not converged}
        	\FOR{$t = 1, \dots, T_{train}$}
        	    \STATE $\veps \sim p(\veps|\hparamScale)$
        		\STATE $\hnetparam \gets \hnetparam - \alpha_1 \frac{\partial}{\partial \hnetparam}f(\hparam + \veps, \eparamE_{\hnetparam}(\hparam + \veps))$
        	\ENDFOR
        	\FOR{$t = 1, \dots, T_{valid}$}
        	    \STATE $\veps \sim p(\veps|\hparamScale)$
        		\STATE $\hparam \gets \hparam - \alpha_2 \frac{\partial}{\partial \hparam}\left(F(\hparam + \veps, \eparamE_{\hnetparam}(\hparam + \veps)) - \tau\mathbb{H}[p(\veps|\hparamScale)]\right)$
        		\STATE $\hparamScale \gets \hparamScale - \alpha_3 \frac{\partial }{\partial \hparamScale}\left(F(\hparam + \veps, \eparamE_{\hnetparam}(\hparam + \veps)) - \tau\mathbb{H}[p(\veps|\hparamScale)]\right)$
        	\ENDFOR
        \ENDWHILE
    \end{algorithmic}
\end{algorithm}
    \end{minipage}
    \label{alg:stn}
\end{wrapfigure}

\textbf{Case 1:} For most regularization schemes, $\Lval$ and hence $F$ does not depend on $\lambda_i$ directly and thus the only gradient is through $\eparamE_{\hnetparam}$.
Thus, the reparametrization gradient can be used.

\textbf{Case 2:} If $\Lval$ relies explicitly on $\lambda_i$, then we can use the REINFORCE gradient estimator \citep{williams1992simple} to estimate the derivative of the expectation with respect to $\lambda_i$.
The number of hidden units in a layer is an example of a hyperparameter that requires this approach since it directly affects the validation loss.
We do not show this in Algorithm \ref{alg:stnn-training}, since we do not tune any hyperparameters which fall into this case.

\vspace{-4mm}
\section{Experiments}
\vspace{-2mm}
We applied our method to convolutional networks and LSTMs \citep{hochreiter1997long}, yielding self-tuning CNNs (ST-CNNs) and self-tuning LSTMs (ST-LSTMs).
We first investigated the behavior of STNs in a simple setting where we tuned a single hyperparameter, and found that STNs discovered \textit{hyperparameter schedules} that outperformed fixed hyperparameter values.
Next, we compared the performance of STNs to commonly-used hyperparameter optimization methods on the CIFAR-10~\citep{krizhevsky2009learning} and PTB~\citep{marcus1993building} datasets.

\vspace{-4mm}
\subsection{Hyperparameter Schedules}
\label{sec:hyperparam-schedules}
\vspace{-2mm}
Due to the joint optimization of the hypernetwork weights and hyperparameters, STNs do not use a single, fixed hyperparameter during training.
Instead, STNs discover schedules for adapting the hyperparameters online, which can outperform \emph{any} fixed hyperparameter.
We examined this behavior in detail on the PTB corpus~\citep{marcus1993building} using an ST-LSTM to tune the output dropout rate applied to the hidden units.

The schedule discovered by an ST-LSTM for output dropout, shown in Figure \ref{fig:st-lstm-dropout-scheds}, outperforms the best, fixed output dropout rate (0.68) found by a fine-grained grid search, 
achieving 82.58 vs 85.83 validation perplexity.
We claim that this is a consequence of the schedule, and not of regularizing effects from sampling hyperparameters or the limited capacity of $\eparamE_{\hnetparam}$.

To rule out the possibility that the improved performance is due to stochasticity introduced by sampling hyperparameters during STN training, we trained a standard LSTM while perturbing its dropout rate around the best value found by grid search.
We used (1) random Gaussian perturbations, and (2) sinusoid perturbations for a cyclic regularization schedule.
STNs outperformed both perturbation methods (Table~\ref{table:dropouto-comparison}), showing that the improvement is not merely due to hyperparameter stochasticity.
Details and plots of each perturbation method are provided in Appendix~\ref{app:schedule-details}.

\vspace{-0.03\textheight}
\begin{minipage}[t]{0.55\linewidth}
    \begingroup
        \begin{table}[H]
        \vspace{1mm}
        \footnotesize
        \centering
        \setlength{\tabcolsep}{6pt}
        
        \begin{tabular}{@{}ccc@{}}
        \toprule
        \textbf{Method}                  & \textbf{Val}    & \textbf{Test}    \\ \midrule
        $p=0.68$, Fixed                  & 85.83           & 83.19            \\
        $p = 0.68$ w/ Gaussian Noise     & 85.87           & 82.29            \\
        $p = 0.68$ w/ Sinusoid Noise     & 85.29           & 82.15            \\
        $p = 0.78$ (Final STN Value)     & 89.65           & 86.90            \\
        \textbf{STN}                     & \textbf{82.58}  & \textbf{79.02}   \\ \bottomrule
        LSTM w/ STN Schedule             & 82.87           & 79.93            \\ \bottomrule
        \end{tabular}
        \caption{Comparing an LSTM trained with fixed and perturbed output dropouts, an STN, and LSTM trained with the STN schedule.}
        \label{table:dropouto-comparison}
        \end{table}
    \endgroup
\end{minipage}
\hfill
\begin{minipage}[t]{0.42\linewidth}
    \begingroup
    \begin{figure}[H]
        \includegraphics[width=\textwidth]{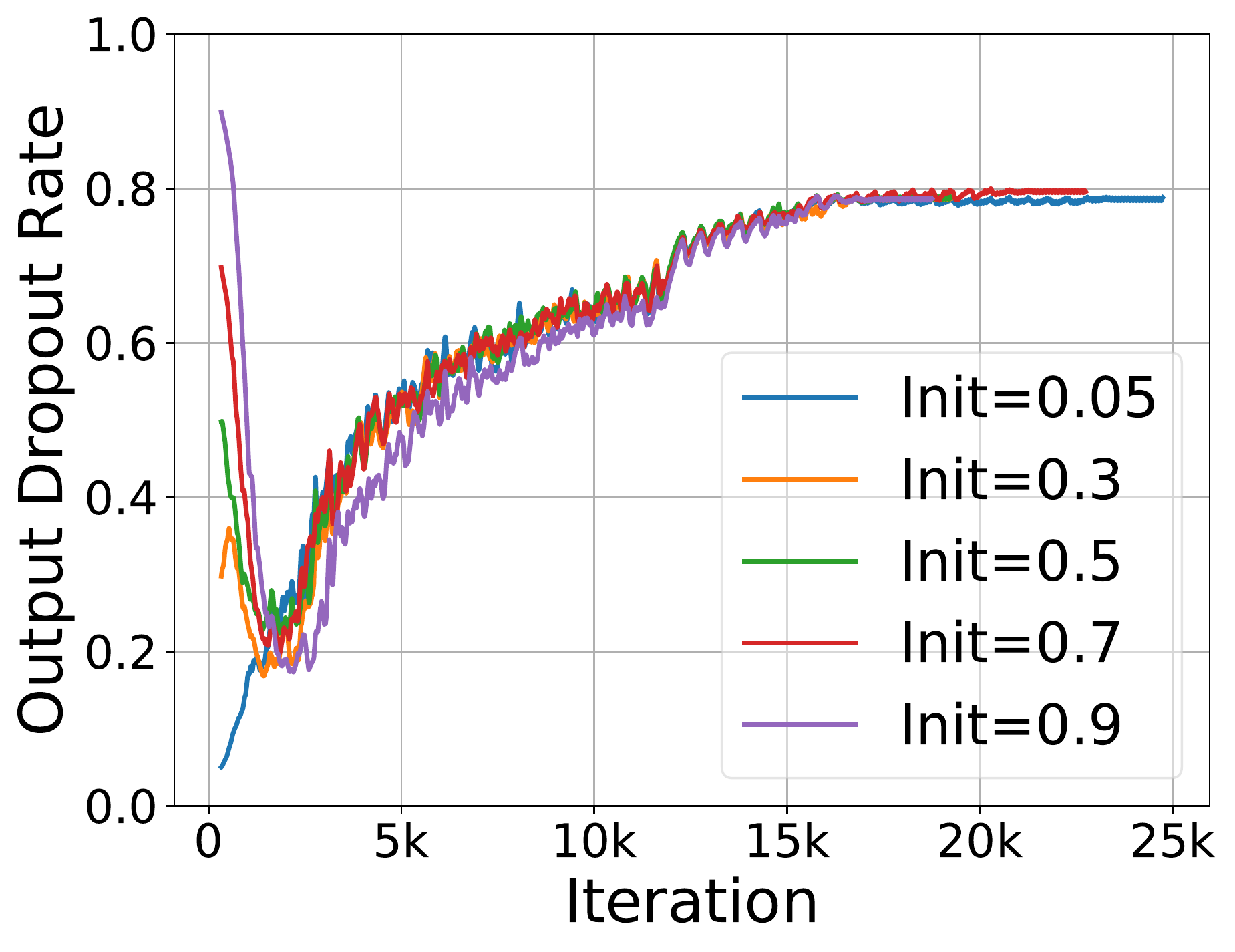}
        \vspace{-2mm}
        \caption{Dropout schedules found by the ST-LSTM for different initial dropout rates.}
        \label{fig:st-lstm-dropout-scheds}
    \end{figure}
    \endgroup
\end{minipage}

To determine whether the limited capacity of $\eparamE_{\hnetparam}$ acts as a regularizer, we trained a standard LSTM from scratch using the schedule for output dropout discovered by the ST-LSTM.
Using this schedule, the standard LSTM performed nearly as well as the STN, providing evidence that the schedule itself (rather than some other aspect of the STN) was responsible for the improvement over a fixed dropout rate.
To further demonstrate the importance of the hyperparameter schedule, we also trained a standard LSTM from scratch using the final dropout value found by the STN (0.78), and found that it did not perform as well as when following the schedule.
The final validation and test perplexities of each variant are shown in Table~\ref{table:dropouto-comparison}.

Next, we show in Figure \ref{fig:st-lstm-dropout-scheds} that the STN discovers the same schedule regardless of the initial hyperparameter values.
Because hyperparameters adapt over a shorter timescale than the weights, we find that at any given point in training, the hyperparameter adaptation has already equilibrated.
As shown empirically in Appendix~\ref{app:schedule-details}, low regularization is best early in training, while higher regularization is better later on.
We found that the STN schedule implements a curriculum by using a low dropout rate early in training, aiding optimization, and then gradually increasing the dropout rate, leading to better generalization.

\begin{table}[]
\vspace{-0.05\textheight}
\centering
\setlength{\tabcolsep}{6pt}
\footnotesize
\begin{tabular}{@{}ccccc@{}}
\multicolumn{1}{l}{}                   & \multicolumn{2}{c}{\textbf{PTB}}                   & \multicolumn{2}{c}{\textbf{CIFAR-10}}                     \\ \midrule
\multicolumn{1}{c}{\textbf{Method}}   & \textbf{Val Perplexity} & \textbf{Test Perplexity} & \textbf{Val Loss} & \multicolumn{1}{c}{\textbf{Test Loss}} \\ \midrule
\multicolumn{1}{c|}{\textbf{Grid Search}}           & 97.32          & 94.58           & 0.794           & 0.809            \\
\multicolumn{1}{c|}{\textbf{Random Search}}         & 84.81          & 81.46           & 0.921           & 0.752            \\
\multicolumn{1}{c|}{\textbf{Bayesian Optimization}} & 72.13          & 69.29           & 0.636           & 0.651            \\
\multicolumn{1}{c|}{\textbf{STN}}                   &\textbf{70.30}  &\textbf{67.68}   & \textbf{0.575}  & \textbf{0.576}   \\
\bottomrule
\end{tabular}
\caption{Final validation and test performance of each method on the PTB word-level language modeling task, and the CIFAR-10 image-classification task.}
\label{table:ptb-cifar10-results}
\vspace{-5mm}
\end{table}

\subsection{Language modeling}
\vspace{-2mm}

We evaluated an ST-LSTM on the PTB corpus~\citep{marcus1993building}, which is widely used as a benchmark for RNN regularization due to its small size~\citep{gal2016theoretically,merity2017regularizing,wen2018flipout}.
We used a 2-layer LSTM with 650 hidden units per layer and 650-dimensional word embeddings.
We tuned 7 hyperparameters: variational dropout rates for the input, hidden state, and output; embedding dropout (that sets rows of the embedding matrix to $\vzero$); DropConnect~\citep{wan2013regularization} on the hidden-to-hidden weight matrix; and coefficients $\alpha$ and $\beta$ that control the strength of activation regularization and temporal activation regularization, respectively.
For LSTM tuning, we obtained the best results when using a fixed perturbation scale of 1 for the hyperparameters.
Additional details about the experimental setup and the role of these hyperparameters can be found in Appendix~\ref{app:lstm-exp-details}.

\begin{figure}[h]
    \vspace{-2mm}
    \centering
    \begin{subfigure}[t]{0.33\textwidth}
        \includegraphics[width=\textwidth]{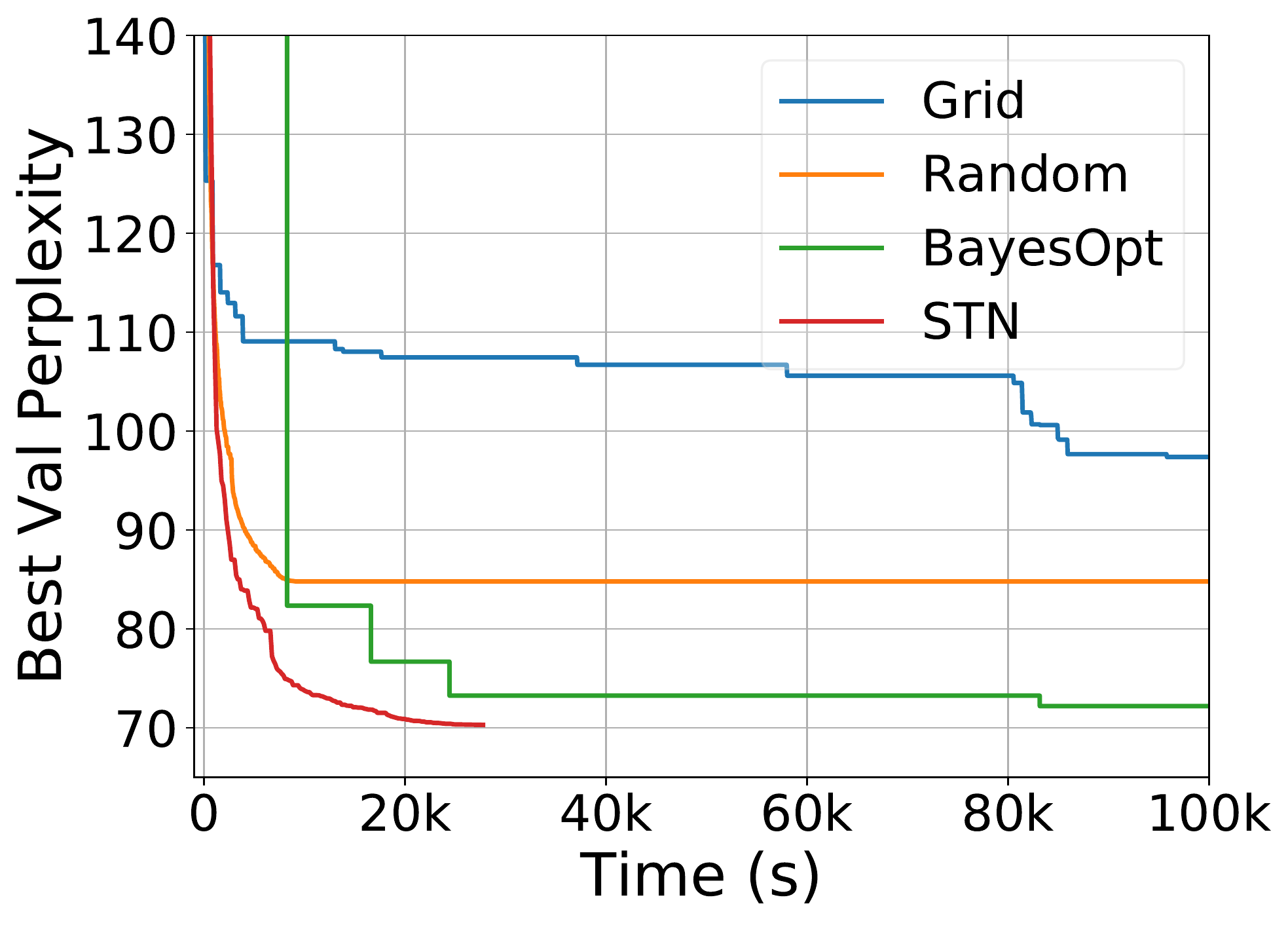}
        \caption{Time comparison}
        \label{fig:stn-time}
    \end{subfigure}
    \hfill
    \begin{subfigure}[t]{0.31\textwidth}
        \includegraphics[width=\textwidth]{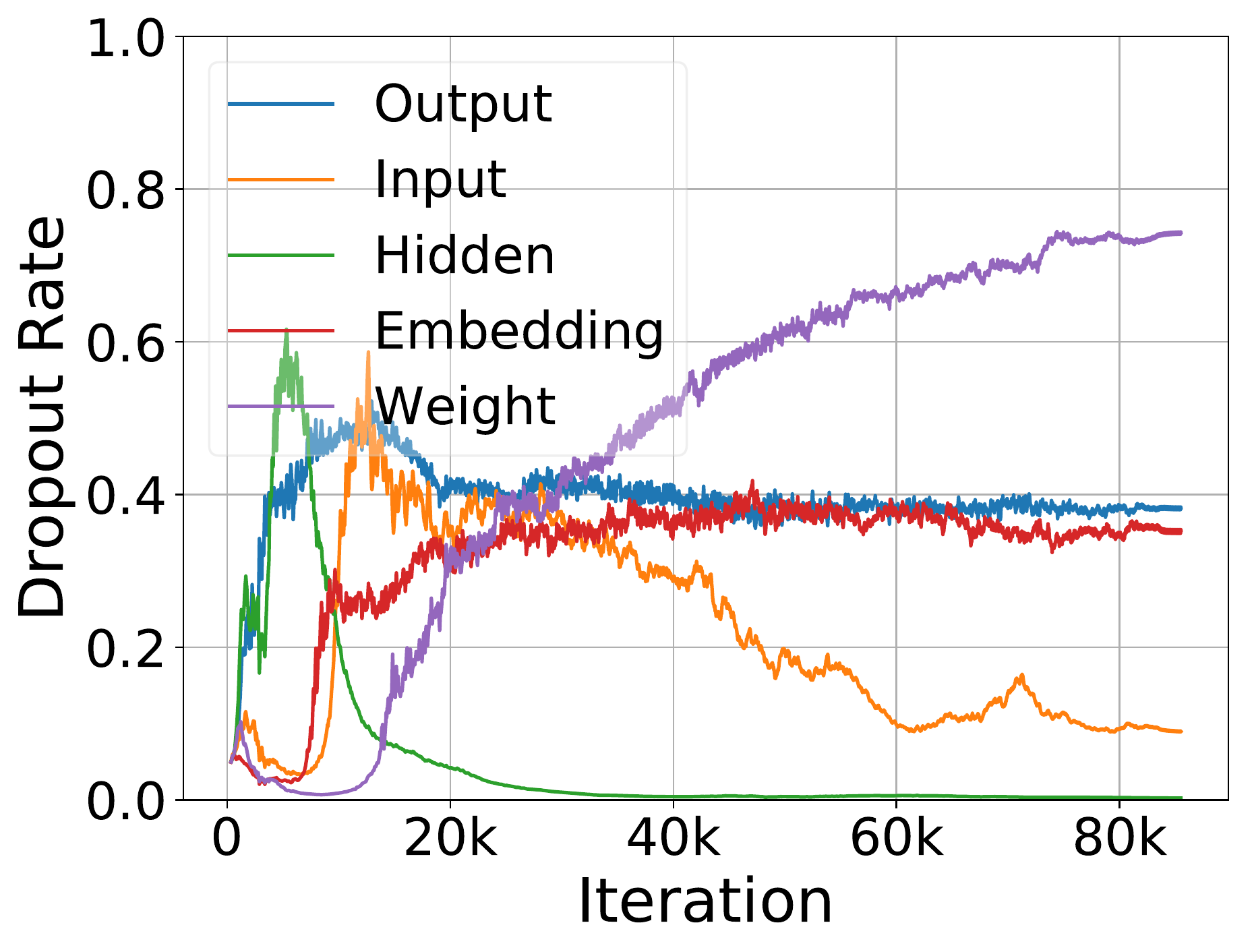}
        \vspace{-0.02\textheight}
        \caption{STN schedule for dropouts}
        \label{fig:stn-dropout-schedule}
    \end{subfigure}
    \hfill
    \begin{subfigure}[t]{0.32\textwidth}
        \includegraphics[width=\textwidth]{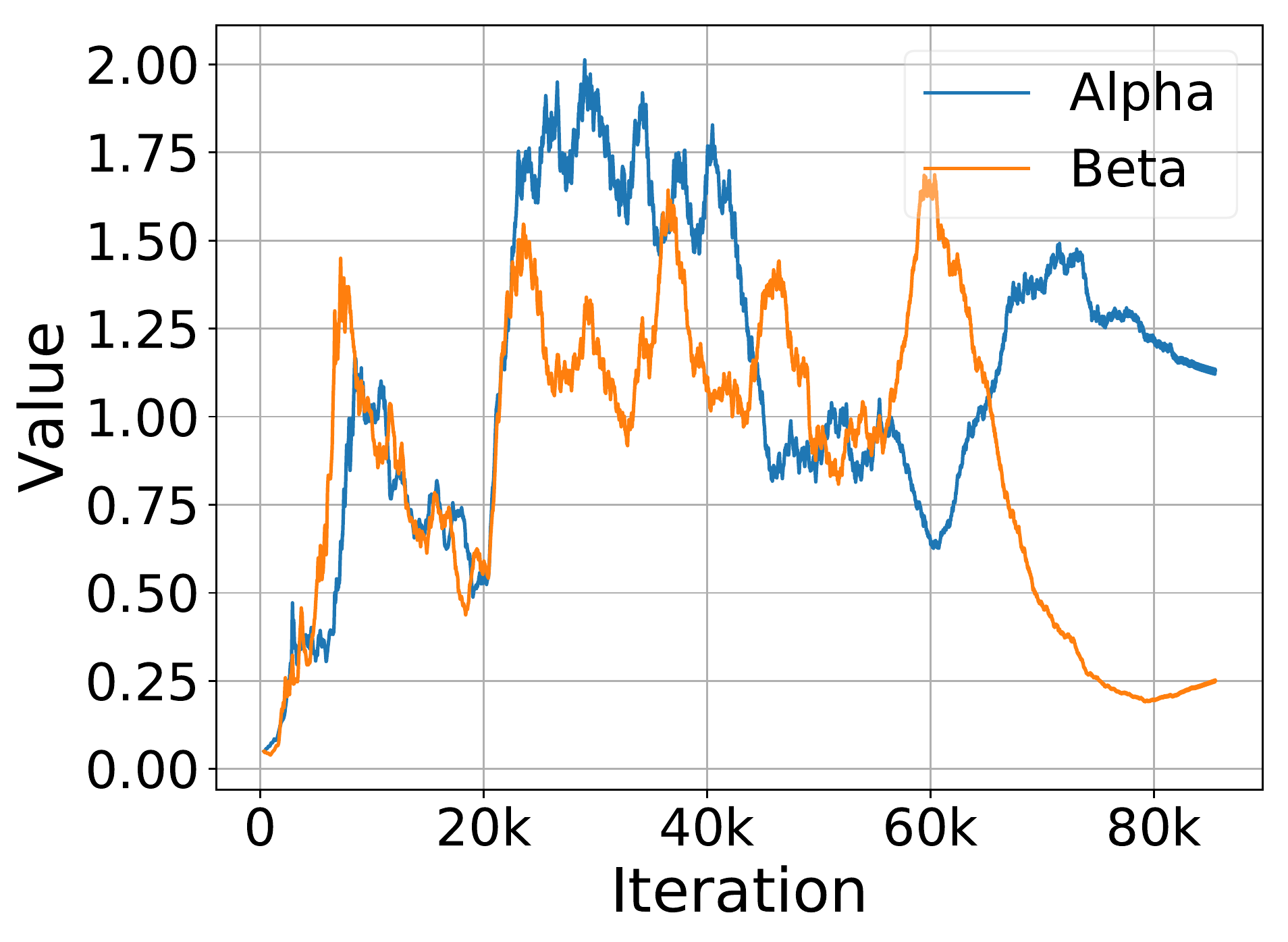}
        \vspace{-0.02\textheight}
        \caption{STN schedule for $\alpha$ and $\beta$}
        \label{fig:stn-alpha-beta-schedule}
    \end{subfigure}
    \vspace{-0.01\textheight}
    
    \caption{\textbf{(a)} A comparison of the best validation perplexity achieved on PTB over time, by grid search, random search, Bayesian optimization, and STNs. STNs achieve better (lower) validation perplexity in less time than the other methods. \textbf{(b)} The hyperparameter schedule found by the STN for each type of dropout. \textbf{(c)} The hyperparameter schedule found by the STN for the coefficients of activation regularization and temporal activation regularization.}
    \vspace{-2mm}
\end{figure}

We compared STNs to grid search, random search, and Bayesian optimization.\footnote{For grid search and random search we used the Ray Tune libraries (\url{https://github.com/ray-project/ray/tree/master/python/ray/tune}).
For Bayesian optimization, we used Spearmint (\url{https://github.com/HIPS/Spearmint}).}
Figure~\ref{fig:stn-time} shows the best validation perplexity achieved by each method over time.
STNs outperform other methods, achieving lower validation perplexity more quickly.
The final validation and test perplexities achieved by each method are shown in Table~\ref{table:ptb-cifar10-results}.
We show the schedules the STN finds for each hyperparameter in Figures~\ref{fig:stn-dropout-schedule} and \ref{fig:stn-alpha-beta-schedule}; we observe that they are nontrivial, with some forms of dropout used to a greater extent at the start of training (including input and hidden dropout), some used throughout training (output dropout), and some that are increased over the course of training (embedding and weight dropout).

\vspace{-4mm}
\subsection{Image Classification}
\vspace{-2mm}

\begin{wrapfigure}[16]{H!}{0.43\textwidth}
    \vspace{-0.024\textheight}
    \includegraphics[width=0.42\textwidth]{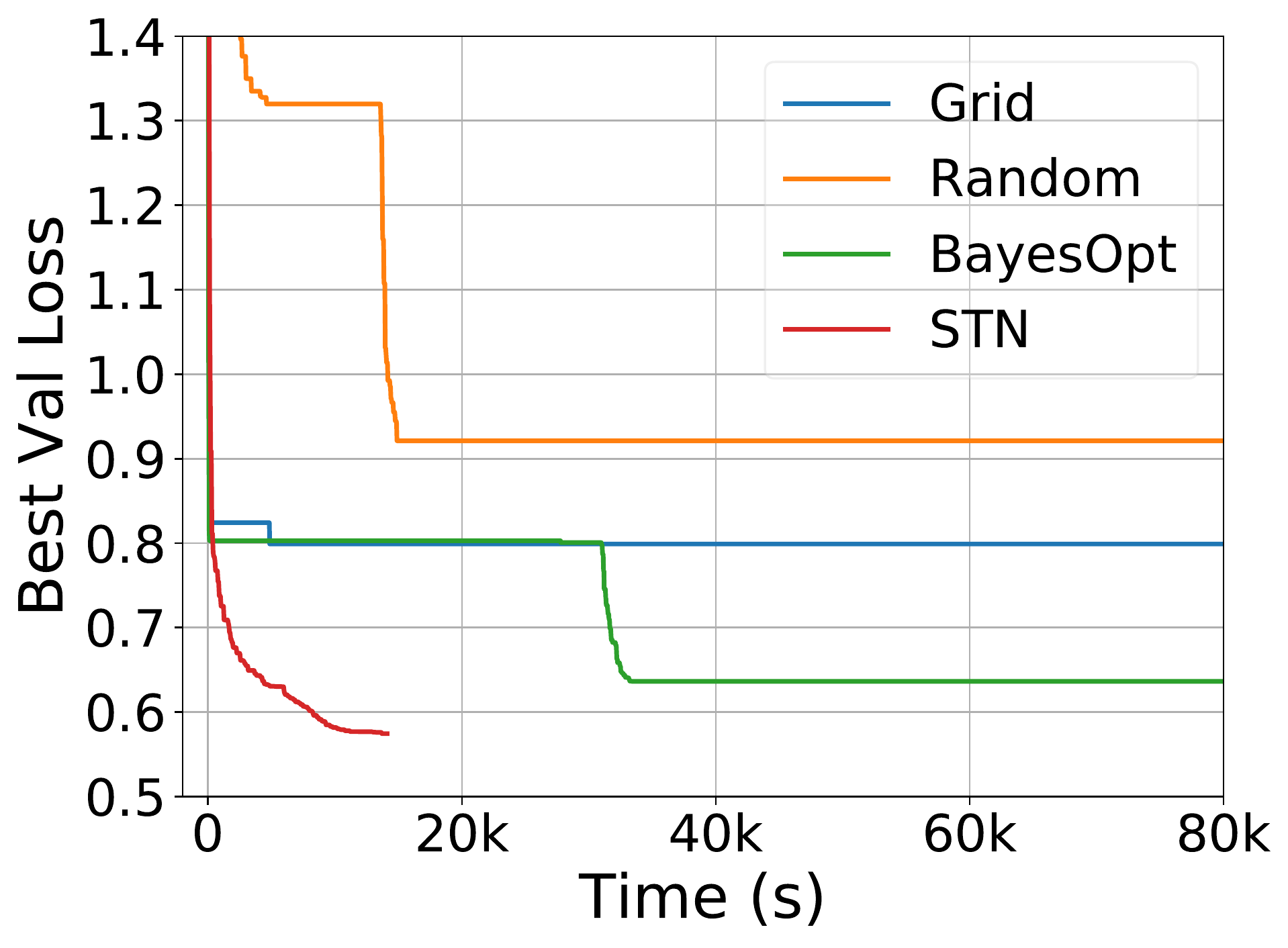}
    \vspace{-0.01\textheight}
    \caption{A comparison of the best validation loss achieved on CIFAR-10 over time, by grid search, random search, Bayesian optimization, and STNs. STNs outperform other methods for many computational budgets.}
    \label{fig:cnn-baseline}
\end{wrapfigure}
We evaluated ST-CNNs on the CIFAR-10~\citep{krizhevsky2009learning} dataset, where it is easy to overfit with high-capacity networks.
We used the AlexNet architecture~\citep{krizhevsky2012imagenet}, and tuned: (1) continuous hyperparameters controlling per-layer activation dropout, input dropout, and scaling noise applied to the input,
(2) discrete data augmentation hyperparameters controlling the length and number of cut-out holes \citep{devries2017improved}, and
(3) continuous data augmentation hyperparameters controlling the amount of noise to apply to the hue, saturation, brightness, and contrast of an image.
In total, we considered 15 hyperparameters.

We compared STNs to grid search, random search, and Bayesian optimization.
Figure~\ref{fig:cnn-baseline} shows the lowest validation loss achieved by each method over time, and Table~\ref{table:ptb-cifar10-results} shows the final validation and test losses for each method.
Details of the experimental set-up are provided in Appendix \ref{app:cnn-exp-details}.
Again, STNs find better hyperparameter configurations in less time than other methods. The hyperparameter schedules found by the STN are shown in Figure~\ref{fig:cnn-hyperparameter-adaptation}.

\begin{figure}
    \centering
    \vspace{-2mm}
    \includegraphics[width=0.31\textwidth]{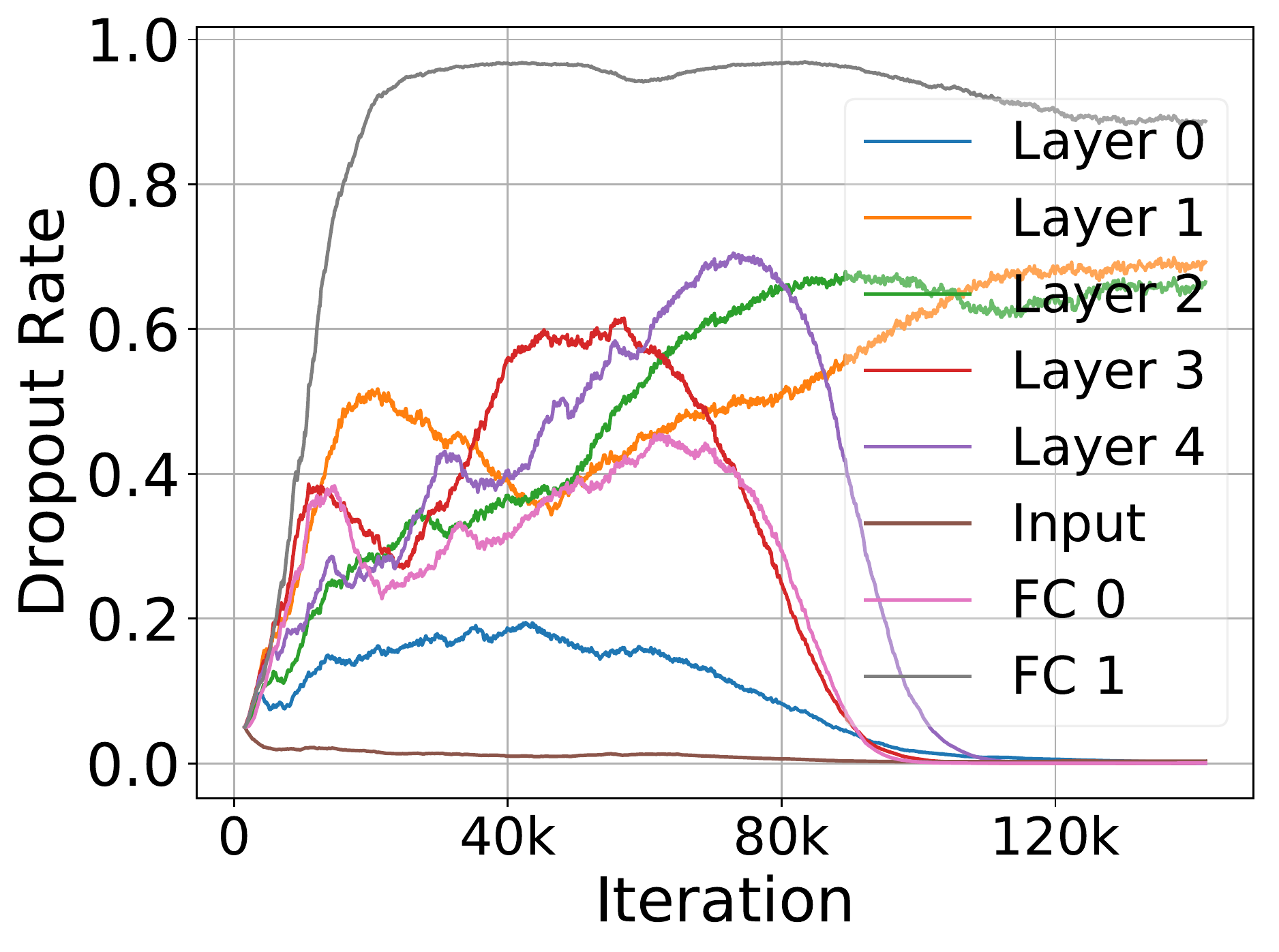}
    \ \ \ 
    \includegraphics[width=0.31\textwidth]{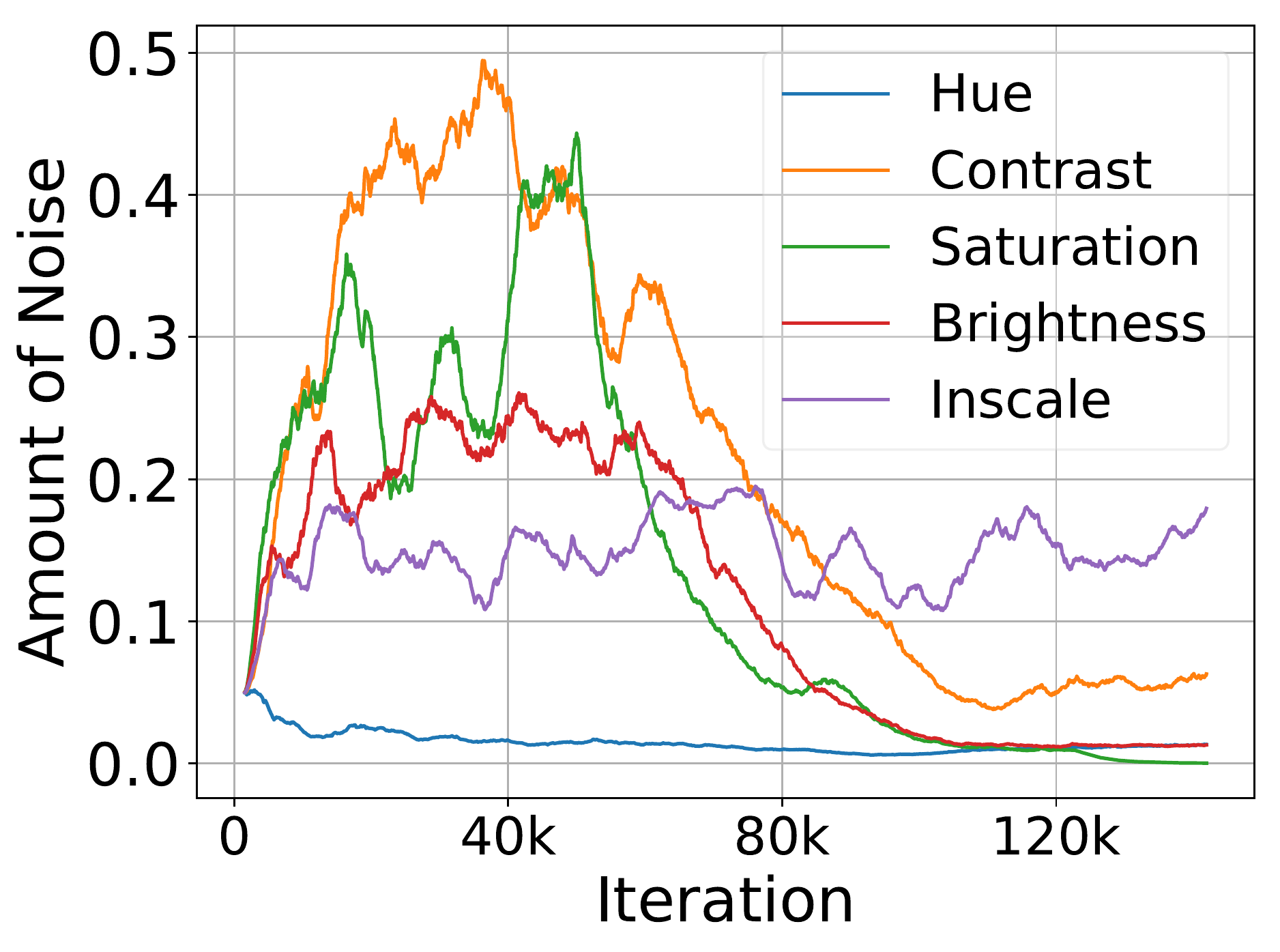}
    \ \ \ 
    \includegraphics[width=0.295\textwidth]{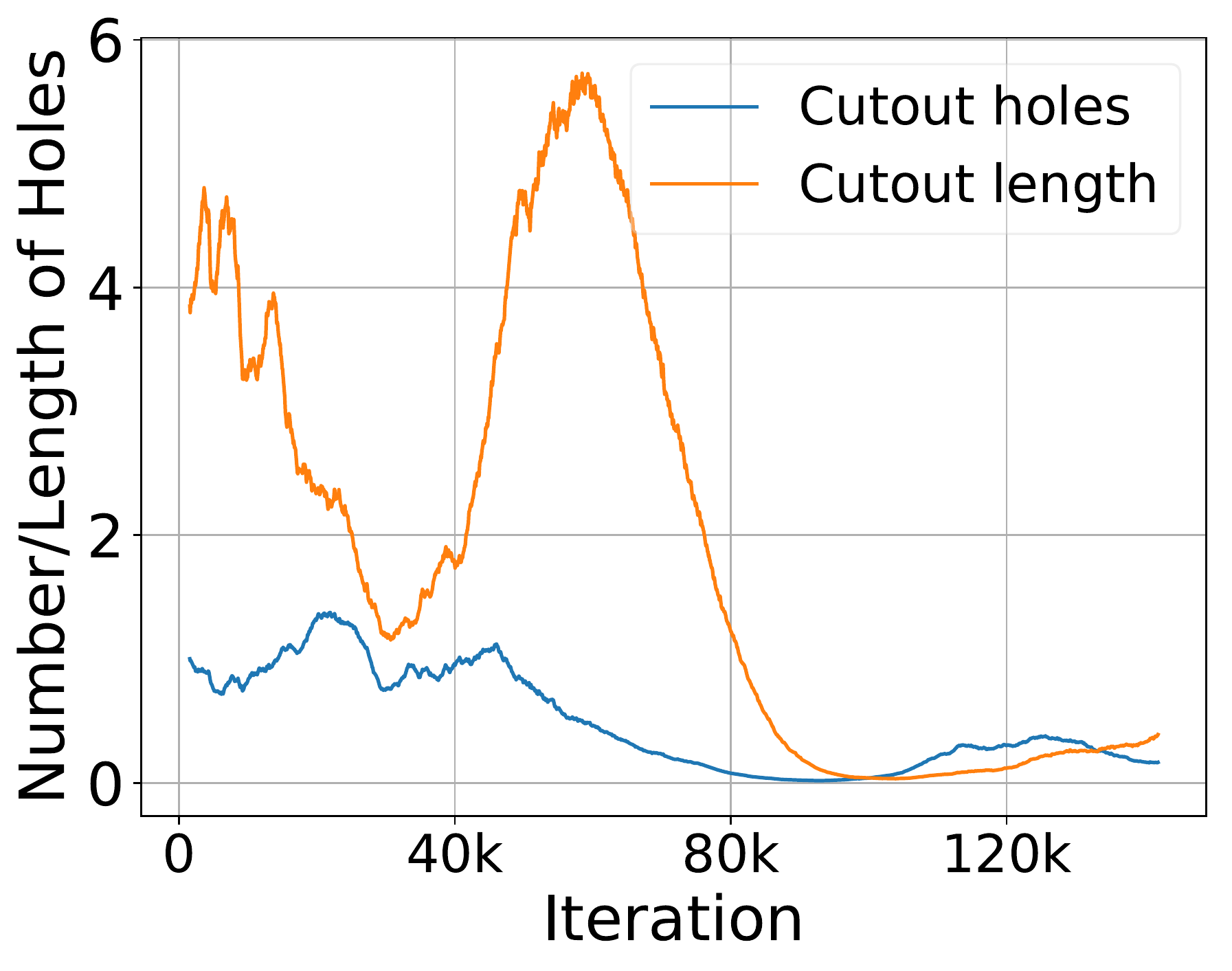}
    \caption{The hyperparameter schedule prescribed by the STN while training for image classification.
    The dropouts are indexed by the convolutional layer they are applied to.
    FC dropout is for the fully-connected layers.}
    \label{fig:cnn-hyperparameter-adaptation}
    \vspace{-5mm}
\end{figure}

\section{Related Work}\label{sec:related-work}
\vspace{-2mm}

\textbf{Bilevel Optimization.} \cite{colson2007overview} provide an overview of bilevel problems, and a comprehensive textbook was written by \cite{bard2013practical}.
When the objectives/constraints are restricted to be linear, quadratic, or convex, a common approach replaces the lower-level problem with its KKT conditions added as constraints for the upper-level problem \citep{hansen1992new,vicente1994descent}.
In the unrestricted setting, our work loosely resembles trust-region methods \citep{colson2005trust}, which repeatedly approximate the problem locally using a simpler bilevel program.
In closely related work,
 \cite{sinha2013efficient} used evolutionary techniques to estimate the best-response function iteratively.

\textbf{Hypernetworks.}
First considered by \cite{schmidhuber1993self,schmidhuber1992learning}, hypernetworks are functions mapping to the weights of a neural net.
Predicting weights in CNNs has been developed in various forms \citep{denil2013predicting,yang2015deep}.
\cite{ha2016hypernetworks} used hypernetworks to generate weights for modern CNNs and RNNs.
\cite{brock2017smash} used hypernetworks to globally approximate a best-response for architecture search.
Because the architecture is not optimized during training, they require a large hypernetwork, unlike ours which locally approximates the best-response.

\textbf{Gradient-Based Hyperparameter Optimization.}
There are two main approaches.
The first approach approximates $\eparam^*(\hparam_0)$ using $\eparam_T(\hparam_0, \eparam_0)$, the value of $\eparam$ after $T$ steps of gradient descent on $f$ with respect to $\eparam$ starting at $(\hparam_0, \eparam_0)$.
The descent steps are differentiated through to approximate $\nicefrac{\partial \eparam^*}{\partial \hparam}(\hparam_0) \approx \nicefrac{\partial \eparam_T}{\partial \hparam}(\hparam_0, \eparam_0)$.
This approach was proposed by \cite{domke2012generic} and used by \cite{maclaurin2015gradient}, \cite{luketina2016scalable} and \cite{franceschi2018bilevel}.
The second approach uses the Implicit Function Theorem to derive $\nicefrac{\partial \eparam^*}{\partial \hparam}(\hparam_0)$ under certain conditions.
This was first developed for hyperparameter optimization in neural networks~\citep{larsen1996design} and developed further by \cite{pedregosa2016hyperparameter}.
Similar approaches have been used for hyperparameter optimization in log-linear models \citep{foo2008efficient}, kernel selection \citep{chapelle2002choosing,seeger2007cross}, and image reconstruction \citep{kunisch2013bilevel,calatroni2016bilevel}.
Both approaches struggle with certain hyperparameters, since they differentiate gradient descent or the training loss with respect to the hyperparameters.
In addition, differentiating gradient descent becomes prohibitively expensive as the number of descent steps increases, while implicitly deriving $\nicefrac{\partial \eparam^*}{\partial \hparam}$ requires using Hessian-vector products with conjugate gradient solvers to avoid directly computing the Hessian.

\textbf{Model-Based Hyperparameter Optimization.}
A common model-based approach is Bayesian optimization, which models $p(r|\hparam, \mathcal{D})$, the conditional probability of the performance on some metric $r$ given hyperparameters $\hparam$ and a dataset $\mathcal{D} = \{(\hparam_i, r_i)\}$.
We can model $p(r|\hparam, \mathcal{D})$ with various methods \citep{hutter2011sequential,NIPS2011_4443,snoek2012practical,snoek2015scalable}.
$\mathcal{D}$ is constructed iteratively, where the next $\hparam$ to train on is chosen by maximizing an acquisition function $C(\hparam; p(r|\hparam, \mathcal{D}))$ which balances exploration and exploitation.
Training each model to completion can be avoided if assumptions are made on learning curve behavior \citep{swersky2014freeze,klein2016learning}.
These approaches require building inductive biases into $p(r|\hparam, \mathcal{D})$ which may not hold in practice, do not take advantage of the network structure when used for hyperparameter optimization, and do not scale well with the number of hyperparameters.
However, these approaches have consistency guarantees in the limit, unlike ours.

\textbf{Model-Free Hyperparameter Optimization.}
Model-free approaches include grid search and random search.
\cite{bergstra2012random} advocated using random search over grid search.
Successive Halving \citep{jamieson2016non} and Hyperband \citep{li2016hyperband} extend random search by adaptively allocating resources to promising configurations using multi-armed bandit techniques.
These methods ignore structure in the problem, unlike ours which uses rich gradient information.
However, it is trivial to parallelize model-free methods over computing resources and they tend to perform well in practice.

\textbf{Hyperparameter Scheduling.}
Population Based Training (PBT)~\citep{jaderberg2017population} considers schedules for hyperparameters.
In PBT, a population of networks is trained in parallel.
The performance of each network is evaluated periodically, and the weights of under-performing networks are replaced by the weights of better-performing ones; the hyperparameters of the better network are also copied and randomly perturbed for training the new network clone.
In this way, a single model can experience different hyperparameter settings over the course of training, implementing a schedule.
STNs replace the population of networks by a single best-response approximation and use gradients to tune hyperparameters during a single training run.

\section{Conclusion}

We introduced Self-Tuning Networks (STNs), which efficiently approximate the best-response of parameters to hyperparameters by scaling and shifting their hidden units.
This allowed us to use gradient-based optimization to tune various regularization hyperparameters, including discrete hyperparameters.
We showed that STNs discover hyperparameter schedules that can outperform fixed hyperparameters.
We validated the approach on large-scale problems and showed that STNs achieve better generalization performance than competing approaches, in less time.
We believe STNs offer a compelling path towards large-scale, automated hyperparameter tuning for neural networks.

\subsubsection*{Acknowledgments}
We thank Matt Johnson for helpful discussions and advice.
MM is supported by an NSERC CGS-M award, and PV is supported by an NSERC PGS-D award. RG acknowledges support from the CIFAR Canadian AI Chairs program.

\bibliography{iclr2019_conference}
\bibliographystyle{iclr2019_conference}
\clearpage
\appendix

\section{Table of Notation}
\label{app:notation}

\begin{table}[htbp]\caption{Table of Notation}
    \begin{center}
        \begin{tabular}{c | c}
            \toprule
            $\hparam, \eparam$ & Hyperparameters and parameters\\
            $\hparam_0, \eparam_0$ & Current, fixed hyperparameters and parameters\\
            $n, m$ & Hyperparameter and elementary parameter dimension\\
            $f(\hparam, \eparam), F(\hparam, \eparam)$ & Lower-level \& upper-level objective \\
            $r$ & Function mapping unconstrained hyperparameters to the appropriate restricted space \\
            $\Ltr(\hparam, \eparam), \Lval(\hparam, \eparam)$ & Training loss \& validation loss - $(\Ltr(r(\hparam), \eparam), \Lval(r(\hparam), \eparam)) = (f(\hparam, \eparam), F(\hparam, \eparam))$\\
            $\eparam^{*} (\hparam)$ & Best-response of the parameters to the hyperparameters\\
            $F^{*}(\hparam)$ & Single-level objective from best-response, equals $F(\hparam, \eparam^{*} (\hparam))$\\
            $\hparam^{*}$ & Optimal hyperparameters\\
            $\eparamE_\hnetparam (\hparam)$ & Parametric approximation to the best-response function\\
            $\hnetparam$ & Approximate best-response parameters\\
            $\hparamScale$ & Scale of the hyperparameter noise distribution\\
            $\sigma$ & The sigmoid function\\
            $\veps$ & Sampled perturbation noise, to be added to hyperparameters\\
            $p(\veps | \hparamScale), p(\hparam | \hparamScale)$ & The noise distribution and induced hyperparameter distribution\\
            $\alpha$ & A learning rate\\
            $T_{train}, T_{valid}$ & Number of training steps on the training and validation data\\
            $\vx, t$ & An input datapoint and its associated target\\
            $\mathcal{D}$ & A data set consisting of tuples of inputs and targets\\
            $D$ & The dimensionality of input data\\
            $y(\vx, \eparam)$ & Prediction function for input data $\vx$ and elementary parameters $\eparam$\\
            $\rowscale$ & Row-wise rescaling - \emph{not elementwise multiplication}\\
            $\mQ, \vs$ & First and second layer weights of the linear network in Problem~\ref{eqn:linear-regression-l2-regularized-jacobian}\\
            $\mQ_0, \vs_0$ & The basis change matrix and solution to the unregularized Problem~\ref{eqn:linear-regression-l2-regularized-jacobian}\\
            $\mQ^{*}(\hparam), \vs^{*}(\hparam)$ & The best response weights of the linear network in Problem~\ref{eqn:linear-regression-l2-regularized-jacobian}\\
            $\va(\vx; \eparam)$ & Activations of hidden units in the linear network of Problem~\ref{eqn:linear-regression-l2-regularized-jacobian}\\
            $\mW, \vb$ & A layer's weight matrix and bias\\
            $D_{out}, D_{in}$ & A layer's input dimensionality and output dimensionality\\
            $\frac{\partial \Ltr (\hparam, \eparam)}{\partial \hparam}$ & The (validation loss) direct (hyperparameter) gradient\\
            $\frac{\partial \eparam^{*} (\hparam)}{\partial \hparam}$ & The (elementary parameter) response gradient\\
            $\frac{\partial \Ltr (\hparam, \eparam^{*} (\hparam))}{\partial \eparam^{*} (\hparam)} \frac{\partial \eparam^{*} (\hparam)}{\partial \hparam}$ & The (validation loss) response gradient\\
            $\frac{d \Ltr (\hparam, \eparam)}{d \hparam}$ & The hyperparameter gradient: a sum of the validation losses direct and response gradients\\
            \bottomrule
        \end{tabular}
    \end{center}
    \label{tab:TableOfNotation}
\end{table}

\section{Proofs}
\subsection{Lemma \ref{lemma:optimal-unique-diff}}\label{app:pf-optimal-unique-diff}

Because $\eparam_0$ solves Problem \ref{eqn:unconstrained-bilevel-lower} given $\hparam_0$, by the first-order optimality condition we must have:
\begin{equation}
	\frac{\partial f}{\partial \eparam}(\hparam_0, \eparam_0) = 0
\end{equation}
The Jacobian of $\nicefrac{\partial f}{\partial \eparam}$ decomposes as a block matrix with sub-blocks given by:
\begin{equation}
	\left[\begin{array}{c|c} \frac{\partial^2 f}{\partial \hparam\partial \eparam} & \frac{\partial^2 f}{\partial \eparam^2} \end{array}\right]
\end{equation}
We know that $f$ is $\mathcal{C}^2$ in some neighborhood of $(\hparam_0, \eparam_0)$, so $\nicefrac{\partial f}{\partial \eparam}$ is continuously differentiable in this neighborhood.
By assumption, the Hessian $\nicefrac{\partial^2 f}{\partial \eparam^2}$ is positive definite and hence invertible at $(\hparam_0, \eparam_0)$.
By the Implicit Function Theorem, there exists a neighborhood $V$ of $\hparam_0$ and a unique continuously differentiable function $\eparam^*: V \to \sR^m$ such that $\nicefrac{\partial f}{\partial \eparam}(\hparam, \eparam^*(\hparam)) = 0$ for $\hparam \in V$ and $\eparam^*(\hparam_0) = \eparam_0$.

Furthermore, by continuity we know that there is a neighborhood $W_1 \times W_2$ of $(\hparam_0, \eparam_0)$ such that $\nicefrac{\partial^2 f}{\partial \eparam^2}$ is positive definite on this neighborhood.
Setting $U = V \cap W_1 \cap (\eparam^*)^{-1}(W_2)$, we can conclude that $\nicefrac{\partial^2 f}{\partial \eparam^2}(\hparam, \eparam^*(\hparam)) \succ 0$ for all $\hparam \in U$.
Combining this with $\nicefrac{\partial f}{\partial \eparam}(\hparam, \eparam^*(\hparam)) = 0$ and using second-order sufficient optimality conditions, we conclude that $\eparam^*(\hparam)$ is the unique solution to Problem \ref{eqn:unconstrained-bilevel-lower} for all $\hparam \in U$.

\subsection{Lemma \ref{lemma:linear-regression-hidden-rescaling}}\label{app:pf-linear-regression-hidden-rescaling}

This discussion mostly follows from \cite{hastie01statisticallearning}.
We let $\mX \in \sR^{N \times D}$ denote the data matrix where $N$ is the number of training examples and $D$ is the dimensionality of the data.
We let $\vt \in \sR^N$ denote the associated targets.
We can write the SVD decomposition of $\mX$ as:
\begin{equation}
\mX = \mU \mD \mV^\top
\end{equation}
where $\mU$ and $\mV$ are $N \times D$ and $D \times D$ orthogonal matrices and $\mD$ is a diagonal matrix with entries $d_1 \geq d_2 \geq \dots \geq d_D > 0$.
We next simplify the function $y(\vx; \eparam)$ by setting $\vu = \vs^\top \mQ$, so that $y(\vx; \eparam) = \vs^\top \mQ \vx = \vu^\top \vx$.
We see that the Jacobian $\nicefrac{\partial y}{\partial \vx} \equiv \vu$ is constant, and Problem \ref{eqn:linear-regression-l2-regularized-jacobian} simplifies to standard $L_2$-regularized least-squares linear regression with the following loss function:
\begin{equation}\label{eqn:normal-l2-regularized-linear-regression}
\sum_{(\vx, t) \in \train} (\vu^\top \vx - t)^2 + \frac{1}{|\train|}\exp(\lambda) \norm{\vu}^2
\end{equation}

It is well-known (see \cite{hastie01statisticallearning}, Chapter 3) that the optimal solution $\vu^*(\lambda)$ minimizing Equation \ref{eqn:normal-l2-regularized-linear-regression} is given by:
\begin{equation}
\vu^*(\lambda) = (\mX^\top \mX + \exp(\lambda) \mI)^{-1}\mX^\top \vt = \mV (\mD^2 + \exp(\lambda) \mI)^{-1}\mD\mU^\top \vt
\end{equation}
Furthermore, the optimal solution $\vu^*$ to the unregularized version of Problem \ref{eqn:normal-l2-regularized-linear-regression} is given by:
\begin{equation}
\vu^* = \mV \mD^{-1} \mU^\top \vt
\end{equation}
Recall that we defined $\mQ_0 = \mV^\top$, i.e., the change-of-basis matrix from the standard basis to the principal components of the data matrix, and we defined $\vs_0$ to solve the unregularized regression problem given $\mQ_0$.
Thus, we require that $\mQ_0^\top \vs_0 = \vu^*$ which implies $\vs_0 = \mD^{-1} \mU^\top \vt$.

There are not unique solutions to Problem \ref{eqn:linear-regression-l2-regularized-jacobian}, so we take any functions $\mQ(\lambda), \vs(\lambda)$ which satisfy $\mQ(\lambda)^\top\vs(\lambda) = \vv^*(\lambda)$ as ``best-response functions''.
We will show that our chosen functions $\mQ^*(\lambda) = \sigma(\lambda \vv + \vc) \rowscale \mQ_0$ and $\vs^*(\lambda) = \vs_0$, where $\vv = -\vone$ and $c_i = 2\log(d_i)$ for $i = 1, \dots, D$, meet this criteria.
We start by noticing that for any $d \in \sR_+$, we have:
\begin{equation}
\sigma(-\lambda + 2\log(d)) = \frac{1}{1 + \exp(\lambda - 2\log(d))} = \frac{1}{1 + d^{-2}\exp(\lambda)} = \frac{d^2}{d^2 + \exp(\lambda)} 
\end{equation}
It follows that:
\begin{align}
\mQ^*(\lambda)^\top\vs^*(\lambda) &= [\sigma(\lambda \vv + \vc) \rowscale \mQ_0]^\top\vs_0 \\
&= \left[\diag\begin{pmatrix} \sigma(-\lambda + 2 \log(d_1)) \\ \vdots \\
\sigma(-\lambda + 2 \log(d_D)) \end{pmatrix} \mQ_0\right]^\top \vs_0 \\
&= \mQ_0^\top \left[\diag\begin{pmatrix} \frac{d_1^2}{d_1^2 + \exp(\lambda)} \\ \vdots \\ \frac{d_D^2}{d_D^2 + \exp(\lambda)} \end{pmatrix}\right] \vs_0 \\
&= \mV \left[\diag\begin{pmatrix} \frac{d_1^2}{d_1^2 + \exp(\lambda)} \\ \vdots \\ \frac{d_D^2}{d_D^2 + \exp(\lambda)} \end{pmatrix}\right] \mD^{-1} \mU^\top \vt \\ 
&= \mV \left[\diag\begin{pmatrix} \frac{d_1}{d_1^2 + \exp(\lambda)} \\ \vdots \\ \frac{d_D}{d_D^2 + \exp(\lambda)} \end{pmatrix}\right] \mU^\top \vt \\
&= \mV \left[(\mD^2 + \exp(\lambda)\mI)^{-1} \mD\right] \mU^\top \vt \\
&= \vv^*(\lambda)
\end{align}

\subsection{Theorem \ref{theorem:suff-linear-approx}}\label{app:pf-suff-linear-approx}
By assumption $f$ is quadratic, so there exist $\mA \in \sR^{n \times n}$, $\mB \in \sR^{n \times m}$, $\mC \in \sR^{m \times m}$ and $\vd \in \sR^{n}, \ve \in \sR^{m}$ such that:
\begin{equation}\label{eqn:quadratic-objective-coordinates}
f(\hparam, \eparam) = \frac{1}{2} \begin{pmatrix} \hparam^\top & \eparam^\top \end{pmatrix} \begin{pmatrix} \mA & \mB \\ \mB^\top & \mC \end{pmatrix} \begin{pmatrix} \hparam \\ \eparam \end{pmatrix} + \vd^\top \hparam + \ve^\top \eparam
\end{equation}
One can easily compute that:
\begin{equation}
\frac{\partial f}{\partial \eparam}(\hparam, \eparam) = \mB^\top \hparam + \mC \eparam + \ve
\end{equation}
\begin{equation}
\frac{\partial^2 f}{\partial \eparam^2}(\hparam, \eparam) = \mC
\end{equation}
Since we assume $\nicefrac{\partial^2 f}{\partial \eparam^2} \succ \vzero$, we must have $\mC \succ \vzero$. Setting the derivative equal to $\vzero$ and using second-order sufficient conditions, we have:
\begin{equation}\label{eqn:quadratic-best-response}
\eparam^*(\hparam) = -\mC^{-1} (\ve + \mB^\top \hparam)
\end{equation}
Hence, we find:
\begin{equation}\label{eqn:quadratic-best-response-jacobian}
\frac{\partial \eparam^*}{\partial \hparam}(\hparam) = -\mC^{-1}\mB^\top 
\end{equation}
We let $\eparamE_{\hnetparam}(\hparam) = \mU \hparam + \vb$, and define $\hat{f}$ to be the function given by:
\begin{equation}
    \hat{f}(\hparam, \mU, \vb, \vsigma) = \E_{\veps \sim p(\veps|\hparamScale)}\left[f(\hparam + \veps, \mU(\hparam + \veps) + \vb)\right]
\end{equation}
Substituting and simplifying:
\begin{multline}\label{eqn:quadratic-approximate}
\hat{f}(\hparam_0, \mU, \vb, \sigma) = \E_{\veps \sim p(\veps|\hparamScale)}\left[\frac{1}{2}(\hparam_0 + \veps)^\top \mA (\hparam_0 + \veps) + (\hparam_0 + \veps)^\top \mB (\mU(\hparam_0 + \veps) + \vb) \right. \\
+ \frac{1}{2} (\mU(\hparam_0 + \veps) + \vb)^\top \mC (\mU (\hparam_0 + \veps) + \vb)\\
+\left. \vd^\top(\hparam_0 + \veps) + \ve^\top (\mU (\hparam_0 + \veps) + \vb) \right]
\end{multline}
Expanding, we find that \eqref{eqn:quadratic-approximate} is equal to:
\begin{equation}
\E_{\veps \sim p(\veps|\hparamScale)}\left[\circled{1} + \circled{2} + \circled{3} + \circled{4}\right]
\end{equation}
where we have:
\begin{equation}
\circled{1} = \frac{1}{2}\left(\hparam_0^\top \mA \hparam_0 + 2 \veps^\top \mA \hparam_0 + \veps^\top \mA \veps\right)
\end{equation}
\begin{equation}
\circled{2} = \hparam_0^\top \mB \mU \hparam_0 + \hparam_0^\top \mB \mU \veps + \hparam_0^\top \mB \vb + \veps^\top \mB \mU \hparam_0 + \veps^\top \mB \mU \veps + \veps^\top \mB \vb
\end{equation}
\begin{multline}
\circled{3} = \frac{1}{2}(\hparam_0^\top \mU^\top \mC \mU \hparam_0 + \hparam_0 \mU^\top \mC \mU \veps + \hparam_0 \mU^\top \mC \vb + \veps^\top \mU^\top \mC \mU \hparam_0 \\ + \veps^\top \mU^\top \mC \mU \veps + \veps^\top \mU^\top \mC \vb + \vb^\top \mC \mU \hparam_0 + \vb^\top \mC \mU \veps + \vb^\top \mC \vb)
\end{multline}
\begin{equation}
\circled{4} = \vd^\top \hparam_0 + \vd^\top \veps + \ve^\top \mU \hparam_0 + \ve^\top \mU \veps + \ve^\top \vb
\end{equation}
We can simplify these expressions considerably by using linearity of expectation and that $\veps \sim p(\veps | \sigma)$ has mean $\vzero$:
\begin{equation}
\E_{\veps \sim p(\veps|\hparamScale)}\left[\circled{1}\right] = \frac{1}{2}\hparam_0^\top \mA \hparam_0
\end{equation}
\begin{equation}
\E_{\veps \sim p(\veps|\hparamScale)}\left[\circled{2}\right] = \hparam_0^\top \mB \mU \hparam_0 + \hparam_0^\top \mB \vb + \E_{\veps \sim p(\veps|\hparamScale)}\left[\veps^\top \mB \mU \veps\right]
\end{equation}
\begin{multline}
\E_{\veps \sim p(\veps|\hparamScale)}\left[\circled{3}\right] = \frac{1}{2}(\hparam_0^\top \mU^\top \mC \mU \hparam_0 + \hparam_0 \mU^\top \mC \vb + \\ \E_{\veps \sim p(\veps|\hparamScale)}\left[\veps^\top \mU^\top \mC \mU \veps\right] + \vb^\top \mC \mU \hparam_0 + \vb^\top \mC \vb)
\end{multline}
\begin{equation}
\E_{\veps \sim p(\veps|\hparamScale)}\left[\circled{4}\right] = \vd^\top \hparam_0 + \ve^\top \mU \hparam_0 + \ve^\top \vb
\end{equation}
We can use the cyclic property of the Trace operator, $\E_{\veps \sim p(\veps|\hparamScale)}[ \veps \veps^\top] = \sigma^2 \mI$, and commutability of expectation and a linear operator to simplify the expectations of $\circled{2}$ and $\circled{3}$:
\begin{equation}
\E_{\veps \sim p(\veps|\hparamScale)}\left[\circled{2}\right] = \hparam_0^\top \mB \mU \hparam_0 + \hparam_0^\top \mB \vb + \Tr\left[\sigma^2 \mB \mU\right]
\end{equation}
\begin{multline}
\E_{\veps \sim p(\veps|\hparamScale)}\left[\circled{3}\right] = \frac{1}{2}(\hparam_0^\top \mU^\top \mC \mU \hparam_0 + \hparam_0 \mU^\top \mC \vb + \\ \Tr\left[\sigma^2 \mU^\top \mC \mU\right] + \vb^\top \mC \mU \hparam_0 + \vb^\top \mC \vb)
\end{multline}
We can then differentiate $\hat{f}$ by making use of various matrix-derivative equalities \citep{duchi2007properties} to find:
\begin{equation}
\frac{\partial \hat{f}}{\partial \vb}(\hparam_0, \mU, \vb, \sigma) = \frac{1}{2} \mC^\top \mU \hparam_0 + \frac{1}{2} \mC \mU \hparam_0 + \mB^\top \hparam_0 + \ve + \mC \vb
\end{equation}
\begin{equation}
\frac{\partial \hat{f}}{\partial \mU}(\hparam_0, \mU, \vb, \sigma) = \mB^\top \hparam_0 \hparam_0^\top + \sigma^2 \mB^\top + \mC \vb \hparam_0^\top + \ve \hparam_0^\top + \mC \mU \hparam_0 \hparam_0^\top + \sigma^2\mC \mU
\end{equation}
Setting the derivative $\nicefrac{\partial \hat{f}}{\partial \vb}(\hparam_0, \mU, \vb, \sigma)$ equal to $\vzero$, we have:
\begin{equation}\label{eqn:optimal-b}
\vb = -\mC^{-1} (\mC^\top \mU \hparam_0 + \mB^\top \hparam_0 + \ve)
\end{equation}
Setting the derivative for $\nicefrac{\partial \hat{f}}{\partial \mU}(\hparam_0, \mU, \vb, \sigma)$ equal to $\vzero$, we have:
\begin{equation}\label{eqn:optimal-w-in-terms-of-b}
\mC \mU (\hparam_0 \hparam_0^\top + \sigma^2 \mC) = -\mB^\top \hparam_0 \hparam_0^\top - \sigma^2 \mB^\top - \mC \vb \hparam_0^\top - \ve \hparam_0^\top
\end{equation}
Substituting the expression for $\vb$ given by \eqref{eqn:optimal-b} into \eqref{eqn:optimal-w-in-terms-of-b} and simplifying gives:
\begin{align}
\mC \mU(\hparam_0 \hparam_0^\top + \sigma^2 \mI) &= -\sigma^2 \mB^\top + \mC \mU \hparam_0 \hparam_0^\top \\
\implies \sigma^2 \mC\mU &= -\sigma^2 \mB^\top \\
\implies \mU &= -\mC^{-1} \mB^\top
\end{align}
This is exactly the best-response Jacobian $\nicefrac{\partial \eparam^*}{\partial \hparam}(\hparam)$ as given by Equation~\ref{eqn:quadratic-best-response-jacobian}.
Substituting $\mU = \mC^{-1} \mB$ into the \eqref{eqn:optimal-b} gives:
\begin{equation}\
\vb = \mC^{-1} \mB^\top \hparam_0 - \mC^{-1} \mB^\top \hparam_0 - \mC^{-1} \ve
\end{equation}
This is $\eparam^*(\hparam_0) - \nicefrac{\partial \eparam^*}{\partial \hparam}(\hparam_0)$, thus the approximate best-response is exactly the first-order Taylor series of $\eparam^*$ about $\hparam_0$.

\subsection{Best-Response Gradient Lemma}\label{app:best-response-gradient}

\begin{lemma}\label{lemma:optimal-fcn-thm-gradient}
Under the same conditions as Lemma \ref{lemma:optimal-unique-diff} and using the same notation, for all $\hparam \in U$, we have that:
\begin{equation}\label{eqn:optimal-response-gradient}
 \frac{\partial \eparam^*}{\partial \hparam}(\hparam) = -\left[\frac{\partial^2 f}{\partial \eparam^2}(\hparam, \eparam^*(\hparam))\right]^{-1} \frac{\partial^2 f}{\partial \hparam \partial \eparam}(\hparam, \eparam^*(\hparam))
 \end{equation}
\end{lemma}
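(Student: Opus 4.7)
The plan is to obtain the formula by implicit differentiation of the first-order optimality condition, which is already available to us thanks to Lemma~\ref{lemma:optimal-unique-diff}. That lemma guarantees both that $\eparam^*$ is continuously differentiable on the neighborhood $U$ of $\hparam_0$ and that $\nicefrac{\partial^2 f}{\partial \eparam^2}(\hparam, \eparam^*(\hparam))$ remains positive definite (hence invertible) for every $\hparam \in U$, so the right-hand side of \eqref{eqn:optimal-response-gradient} is well defined.

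First I would note that, because $\eparam^*(\hparam)$ is the unique minimizer of $f(\hparam, \cdot)$ for each $\hparam \in U$ and $f$ is $\mathcal{C}^2$, the first-order optimality condition gives the identity
\begin{equation*}
\frac{\partial f}{\partial \eparam}(\hparam, \eparam^*(\hparam)) = \vzero \quad \text{for all } \hparam \in U.
\end{equation*}
Next I would differentiate both sides of this identity with respect to $\hparam$. The left-hand side is the composition of the $\mathcal{C}^1$ map $\hparam \mapsto (\hparam, \eparam^*(\hparam))$ with the $\mathcal{C}^1$ map $\nicefrac{\partial f}{\partial \eparam}$, so the multivariate chain rule applies. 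It yields
\begin{equation*}
\frac{\partial^2 f}{\partial \hparam \partial \eparam}(\hparam, \eparam^*(\hparam)) + \frac{\partial^2 f}{\partial \eparam^2}(\hparam, \eparam^*(\hparam))\,\frac{\partial \eparam^*}{\partial \hparam}(\hparam) = \vzero.
\end{equation*}

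Finally, since $\nicefrac{\partial^2 f}{\partial \eparam^2}(\hparam, \eparam^*(\hparam))$ is invertible on $U$ by Lemma~\ref{lemma:optimal-unique-diff}, I would multiply on the left by its inverse and rearrange to obtain the claimed expression. I do not expect any real obstacle here: the nontrivial analytic content (existence, uniqueness, differentiability of $\eparam^*$, and invertibility of the Hessian on a neighborhood) has already been handled in Lemma~\ref{lemma:optimal-unique-diff}, and what remains is a routine chain-rule computation. The only small care needed is to track transposes/conventions so that the product of the $m \times m$ Hessian block with the $m \times n$ Jacobian $\nicefrac{\partial \eparam^*}{\partial \hparam}$ lines up with the $m \times n$ mixed partial $\nicefrac{\partial^2 f}{\partial \hparam \partial \eparam}$ in the ordering used elsewhere in the paper.
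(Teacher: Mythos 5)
Your proposal is correct and follows essentially the same route as the paper's proof: differentiate the first-order optimality identity $\nicefrac{\partial f}{\partial \eparam}(\hparam, \eparam^*(\hparam)) = \vzero$ along the map $\hparam \mapsto (\hparam, \eparam^*(\hparam))$ via the chain rule, then invert the Hessian block, whose positive definiteness on $U$ was already secured in the construction of $U$ in Lemma~\ref{lemma:optimal-unique-diff}. No gaps.
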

\begin{proof}
Define $\iota^*: U \to \sR^n \times \sR^m$ by $\iota^*(\hparam) = (\hparam, \eparam^*(\hparam))$.
By first-order optimality conditions, we know that:
\begin{equation}
\left(\frac{\partial f}{\partial \eparam} \circ \iota^*\right)(\hparam) = 0 \quad \forall \hparam \in U
\end{equation}
Hence, for all $\hparam \in U$:
\begin{align}
    0 &= \frac{\partial}{\partial \hparam}\left(\frac{\partial f}{\partial \eparam} \circ \iota^*\right)(\hparam)\\
    &= \frac{\partial^2 f}{\partial \eparam^2}(\iota^*(\hparam))\frac{\partial \eparam^*}{\partial \hparam}(\hparam) + \frac{\partial^2 f}{\partial \hparam \partial \eparam}(\iota^*(\hparam))\\
    &= \frac{\partial^2 f}{\partial \eparam^2}(\hparam, \eparam^*(\hparam))\frac{\partial \eparam^*}{\partial \hparam}(\hparam) + \frac{\partial^2 f}{\partial \hparam \partial \eparam}(\hparam, \eparam^*(\lambda))
\end{align}
Rearranging gives Equation \ref{eqn:optimal-response-gradient}.
\end{proof}

\section{Best-Response Approximations for Convolutional Filters}\label{app:best-response-cnn-rnn}
\vspace{-2mm}

We let $L$ denote the number of layers, $C_l$ the number of channels in layer $l$'s feature map, and $K_l$ the size of the kernel in layer $l$.
We let $\mW^{l, c} \in \sR^{C_{l-1} \times K_l \times K_l}$ and $\vb^{l, c} \in \sR$ denote the weight and bias respectively of the $c^{\text{th}}$ convolution kernel in layer $l$ (so $c \in \{1, \dots, C_l\}$).
For $\vu^{l, c}, \va^{l, c} \in \sR^n$, we define best-response approximations $\hat{\mW}^{l,c}_{\hnetparam}$ and $\hat{\vb}^{l,c}_{\hnetparam}$ by:
\begin{equation}
	\hat{\mW}^{l,c}_{\hnetparam}(\hparam) = (\hparam^\top \vu^{l, c}) \odot \mW^{l,c}_{\text{hyper}} + \mW^{l,c}_{\text{elem}}
\end{equation}
\begin{equation}
	\hat{\vb}^{l,c}_{\hnetparam}(\hparam) = (\hparam^\top \va^{l, c}) \odot \vb^{l,c}_{\text{hyper}} + \vb^{l,c}_{\text{elem}}
\end{equation}

Thus, the best-response parameters used for modeling $\mW^{l, c}$, $\vb^{l}$ are $\{\vu^{l, c}, \va^{l,c}, \mW^{l,c}_{\text{hyper}}, \mW^{l,c}_{\text{elem}}, \vb^{l,c}_{\text{hyper}}, \vb^{l,c}_{\text{elem}}\}$.
We can compute the number of parameters used as $2n + 2(|\mW^{l, c}| +|\vb^{l, c}|)$.
Summing over channels $c$, we find the total number of parameters is $2nC_l + 2p$, where $p$ is the total number of parameters in the normal CNN layer.
Hence, we use twice the number of parameters in a normal CNN, plus an overhead that depends on the number of hyperparameters.

For an implementation in code, see Appendix \ref{app:code}.

\section{Language Modeling Experiment Details}
\label{app:lstm-exp-details}

Here we present additional details on the setup of our LSTM language modeling experiments on PTB, and on the role of each hyperparameter we tune.

We trained a 2-layer LSTM with 650 hidden units per layer and 650-dimensional word embeddings (similar to~\citep{zaremba2014recurrent,gal2016theoretically}) on sequences of length 70 in mini-batches of size 40.
To optimize the baseline LSTM, we used SGD with initial learning rate 30, which was decayed by a factor of 4 based on the non-monotonic criterion introduced by~\cite{merity2017regularizing} (i.e., whenever the validation perplexity fails to improve for 5 epochs).
Following~\cite{merity2017regularizing}, we used gradient clipping 0.25.

To optimize the ST-LSTM, we used the same optimization setup as for the baseline LSTM. For the hyperparameters, we used Adam with learning rate 0.01.
We used an alternating training schedule in which we updated the model parameters for 2 steps on the training set and then updated the hyperparameters for 1 step on the validation set.
We used one epoch of warm-up, in which we updated the model parameters, but did not update hyperparameters.
We terminated training when the learning rate dropped below 0.0003.

We tuned variational dropout (re-using the same dropout mask for each step in a sequence) on the input to the LSTM, the hidden state between the LSTM layers, and the output of the LSTM.
We also tuned embedding dropout, which sets entire rows of the word embedding matrix to 0, effectively removing certain words from all sequences.
We regularized the hidden-to-hidden weight matrix using DropConnect (zeroing out weights rather than activations)~\citep{wan2013regularization}.
Because DropConnect operates directly on the weights and not individually on the mini-batch elements, we cannot use independent perturbations per example; instead, we sample a single DropConnect rate per mini-batch.
Finally, we used activation regularization (AR) and temporal activation regularization (TAR).
AR penalizes large activations, and is defined as:
\begin{equation}
\alpha || m \odot h_t ||_2
\end{equation}
where $m$ is a dropout mask and $h_t$ is the output of the LSTM at time $t$.
TAR is a slowness regularizer, defined as:
\begin{equation}
\beta || h_t - h_{t+1} ||_2
\end{equation}
For AR and TAR, we tuned the scaling coefficients $\alpha$ and $\beta$.
For the baselines, the hyperparameter ranges were: $[0, 0.95]$ for the dropout rates, and $[0, 4]$ for $\alpha$ and $\beta$.
For the ST-LSTM, all the dropout rates and the coefficients $\alpha$ and $\beta$ were initialized to $0.05$ (except in Figure~\ref{fig:st-lstm-dropout-scheds}, where we varied the output dropout rate).

\section{Image Classification Experiment Details}
\label{app:cnn-exp-details}

Here, we present additional details on the CNN experiments.
For all results, we held out 20\% of the training data for validation.

We trained the baseline CNN using SGD with initial learning rate 0.01 and momentum 0.9, on mini-batches of size 128.
We decay the learning rate by $10$ each time the validation loss fails to decrease for $60$ epochs, and end training if the learning rate falls below $10^{-5}$ or validation loss has not decreased for $75$ epochs. 
For the baselines---grid search, random search, and Bayesian optimization---the search spaces for the hyperparameters were as follows: dropout rates were in the range $[0, 0.75]$; contrast, saturation, and brightness each had range $[0,1]$; hue had range $[0, 0.5]$; the number of cutout holes had range $[0, 4]$, and the length of each cutout hole had range $[0, 24]$.

We trained the ST-CNN's elementary parameters using SGD with initial learning rate 0.01 and momentum of 0.9, on mini-batches of size 128 (identical to the baselines).
We use the same decay schedule as the baseline model. 
The hyperparameters are optimized using Adam with learning rate 0.003.
We alternate between training the best-response approximation and hyperparameters with the same schedule as the ST-LSTM, i.e. $T_{train}=2$ steps on the training step and $T_{valid}=1$ steps on the validation set. 
Similarly to the LSTM experiments, we used five epochs of warm-up for the model parameters, during which the hyperparameters are fixed.
We used an entropy weight of $\tau = 0.001$ in the entropy regularized objective
(Eq.~\ref{eqn:single-level-vo}). 
The cutout length was restricted to lie in $\{0, \dots, 24\}$ while the number of cutout holes was restricted to lie in $\{0, \dots, 4\}$. 
All dropout rates, as well as the continuous data augmentation noise parameters, are initialized to $0.05$. The cutout length is initialized to 4, and the number of cutout holes is initialized to 1. 
Overall, we found the ST-CNN to be relatively robust to the initialization of hyperparameters, but starting with low regularization aided optimization in the first few epochs. 

\vspace{-2mm}
\section{Additional Details on Hyperparameter Schedules}
\label{app:schedule-details}
\vspace{-2mm}

Here, we draw connections between hyperparameter schedules and curriculum learning.
Curriculum learning~\citep{bengio2009curriculum} is an instance of a family of \textit{continuation methods}~\citep{allgower2012numerical}, which optimize non-convex functions by solving a sequence of functions that are ordered by increasing difficulty.
In a continuation method, one considers a family of training criteria $C_\lambda(\eparam)$ with a parameter $\lambda$, where $C_1(\eparam)$ is the final objective we wish to minimize, and $C_0(\eparam)$ represents the training criterion for a simpler version of the problem.
One starts by optimizing $C_0(\eparam)$ and then gradually increases $\lambda$ from 0 to 1, while keeping $\eparam$ at a local minimum of $C_\lambda(\eparam)$~\citep{bengio2009curriculum}.
This has been hypothesized to both aid optimization and improve generalization.
In this section, we explore how hyperparameter schedules implement a form of curriculum learning; for example, a schedule that increases dropout over time increases stochasticity, making the learning problem more difficult.
We use the results of grid searches to understand the effects of different hyperparameter settings throughout training, and show that greedy hyperparameter schedules can outperform fixed hyperparameter values.

First, we performed a grid search over 20 values each of input and output dropout, and measured the validation perplexity in each epoch.
Figure~\ref{fig:stn-dropouto-dropouti-schedule} shows the validation perplexity achieved by different combinations of input and output dropout, at various epochs during training.
We see that at the start of training, the best validation loss is achieved with small values of both input and output dropout.
As we train for more epochs, the best validation performance is achieved with larger dropout rates.

\begin{figure}[h]
    \centering
    \begin{subfigure}[b]{0.32\textwidth}
        \includegraphics[width=\textwidth]{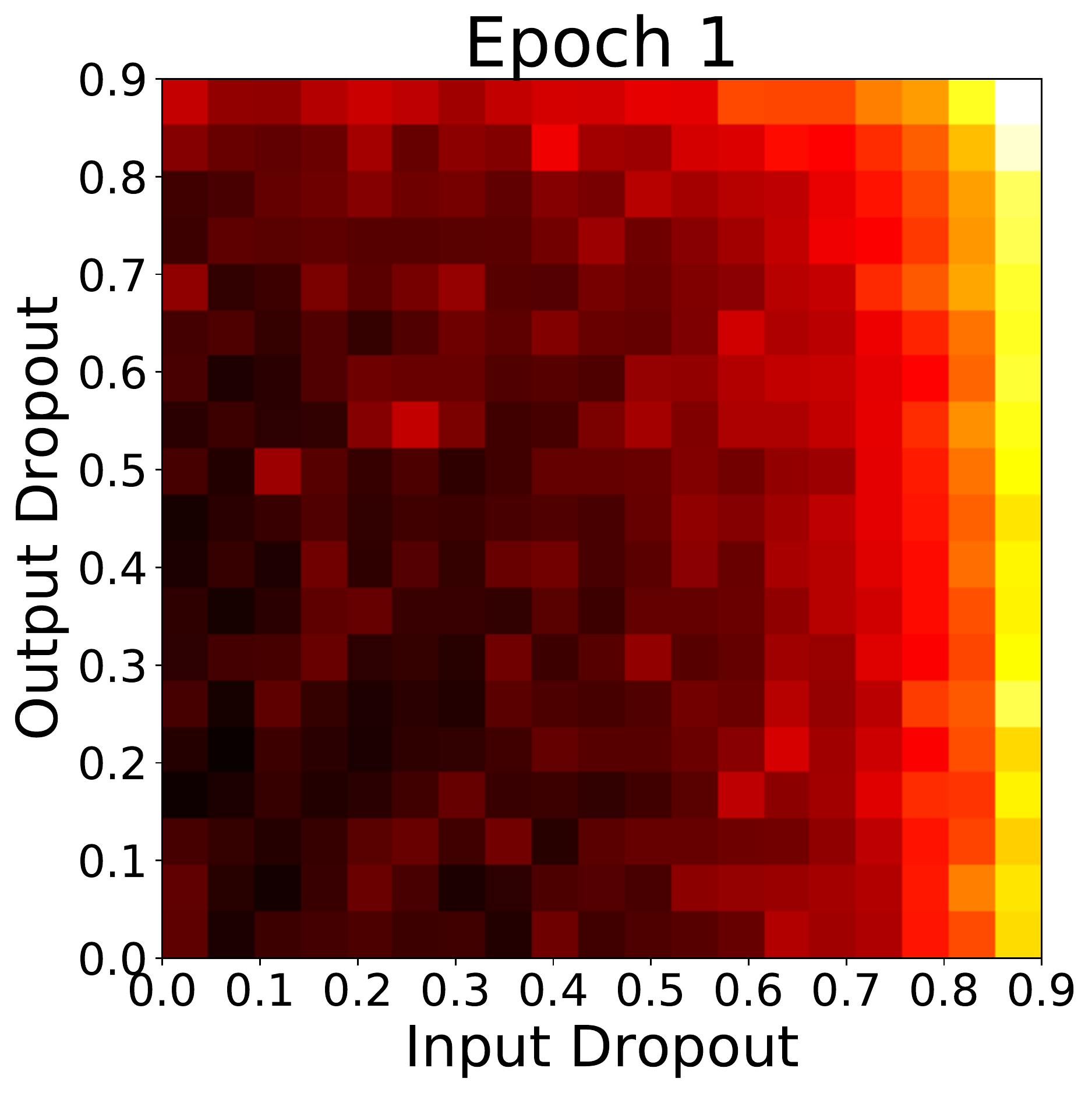}
        \caption{}
    \end{subfigure}
    \hfill
    \begin{subfigure}[b]{0.32\textwidth}
        \includegraphics[width=\textwidth]{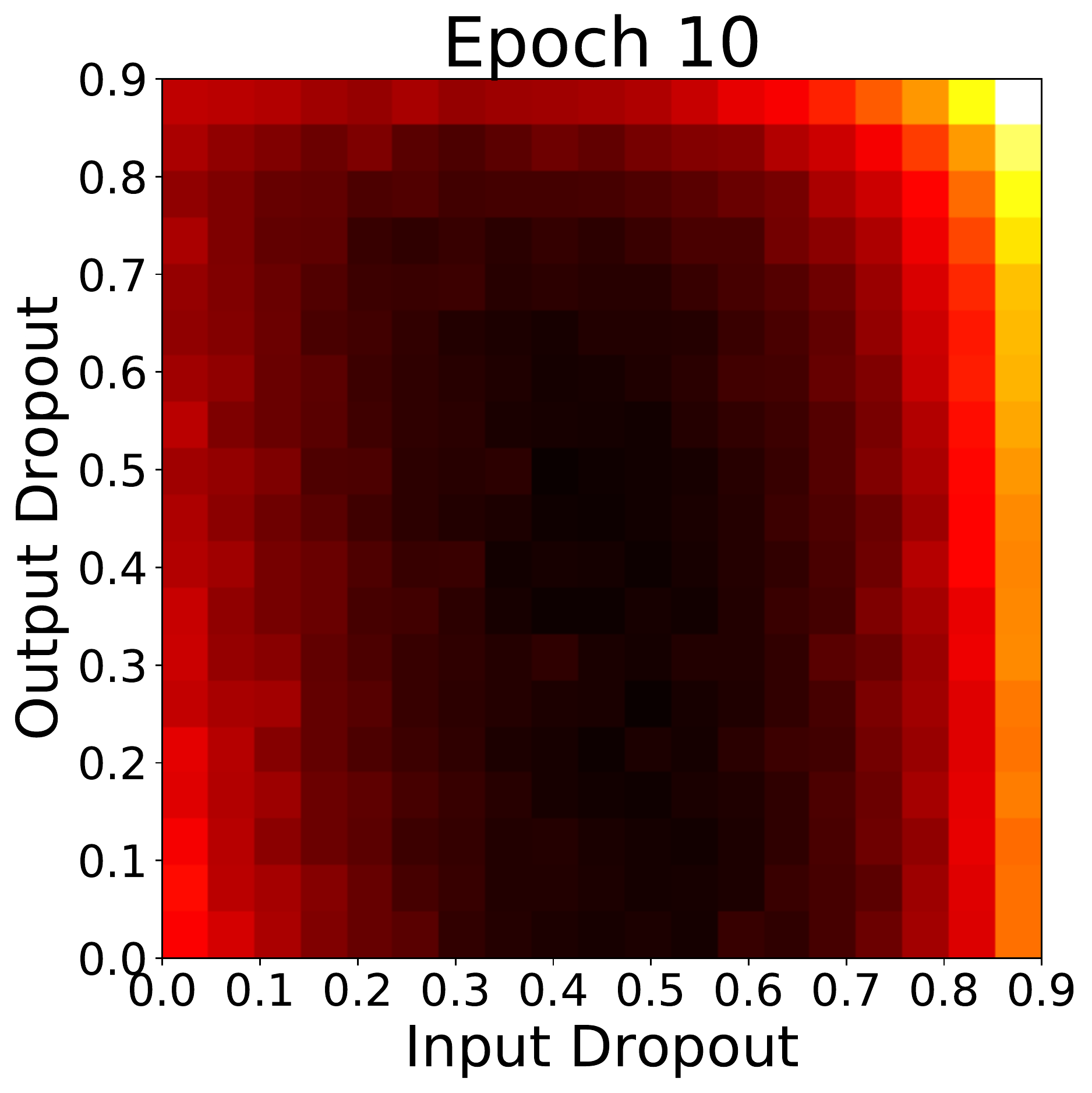}
        \caption{}
    \end{subfigure}
    \hfill
    \begin{subfigure}[b]{0.32\textwidth}
        \includegraphics[width=\textwidth]{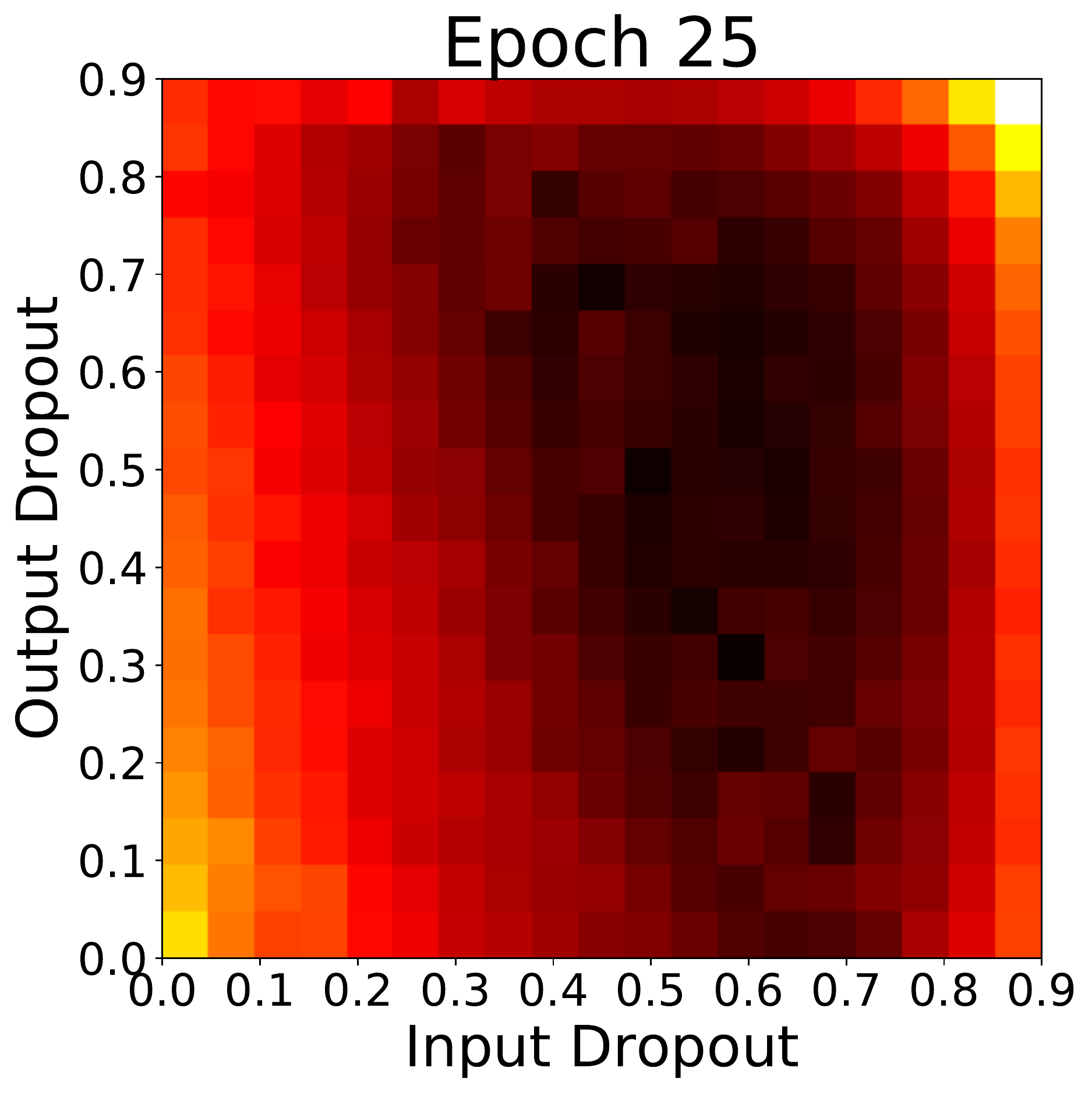}
        \caption{}
    \end{subfigure}
    \caption{\textbf{Validation performance of a baseline LSTM given different settings of input and output dropout, at various epochs during training.} \textbf{(a)}, \textbf{(b)}, and \textbf{(c)} show the validation performance on PTB given different hyperparameter settings, at epochs 1, 10, and 25, respectively.
    Darker colors represent lower (better) validation perplexity.
    }
    \label{fig:stn-dropouto-dropouti-schedule}
\end{figure}

Next, we present a simple example to show the potential benefits of greedy hyperparameter schedules.
For a single hyperparameter---output dropout---we performed a fine-grained grid search and constructed a dropout schedule by using the hyperparameter values that achieve the best validation perplexity at each epoch in training.
As shown in Figure~\ref{fig:grid-dropouto-s}, the schedule formed by taking the best output dropout value in each epoch yields better generalization than any of the fixed hyperparameter values from the initial grid search.
In particular, by using small dropout values at the start of training, the schedule achieves a fast decrease in validation perplexity, and by using larger dropout later in training, it achieves better overall validation perplexity.

\begin{figure}[h]
    \centering
    \begin{subfigure}[t]{0.45\textwidth}
        \includegraphics[width=\textwidth]{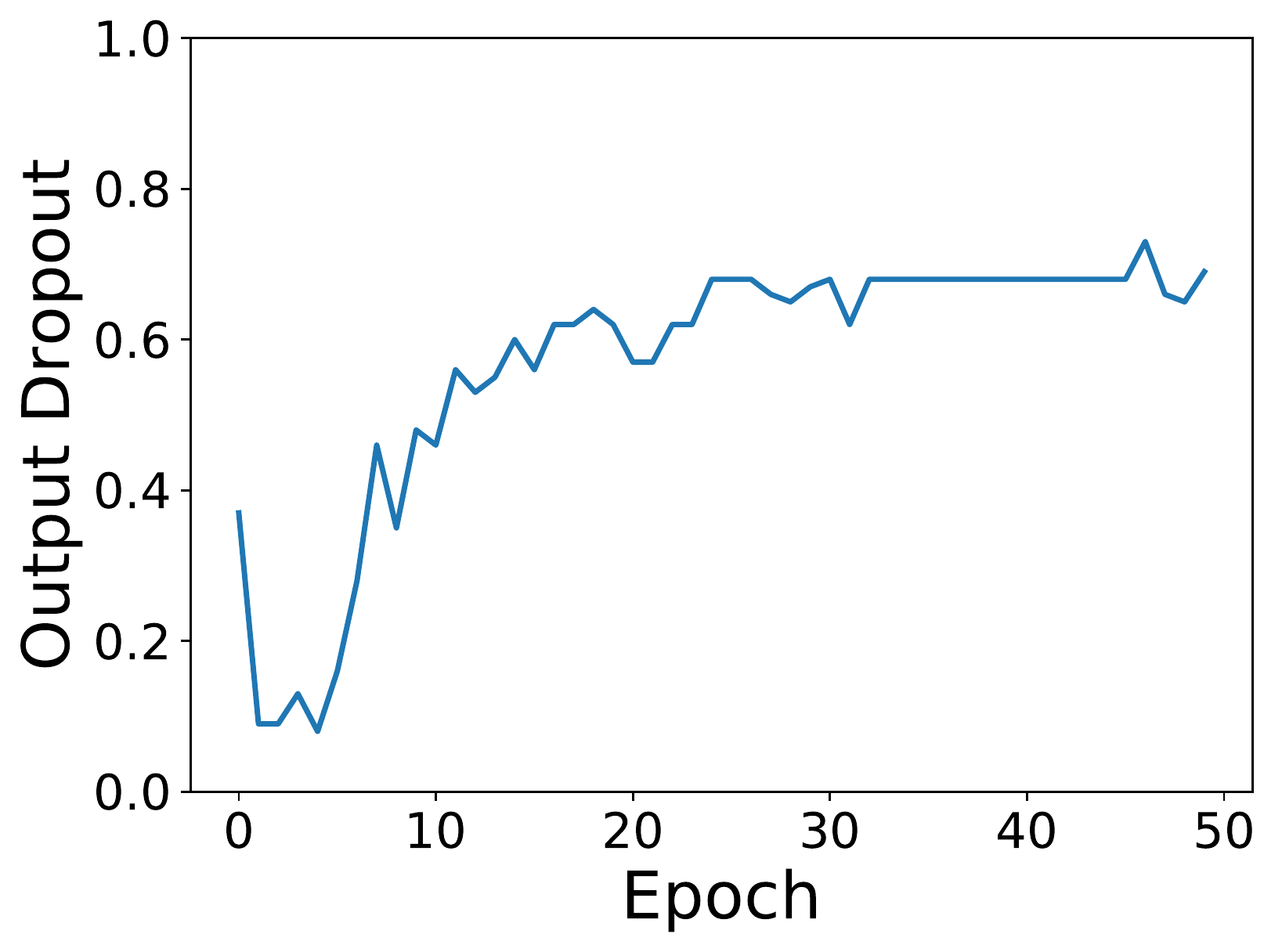}
        \caption{Greedy schedule for output dropout, derived by taking the best hyperparameters in each epoch from a grid search.}
    \end{subfigure}
    \hfill
    \begin{subfigure}[t]{0.45\textwidth}
        \includegraphics[width=\textwidth]{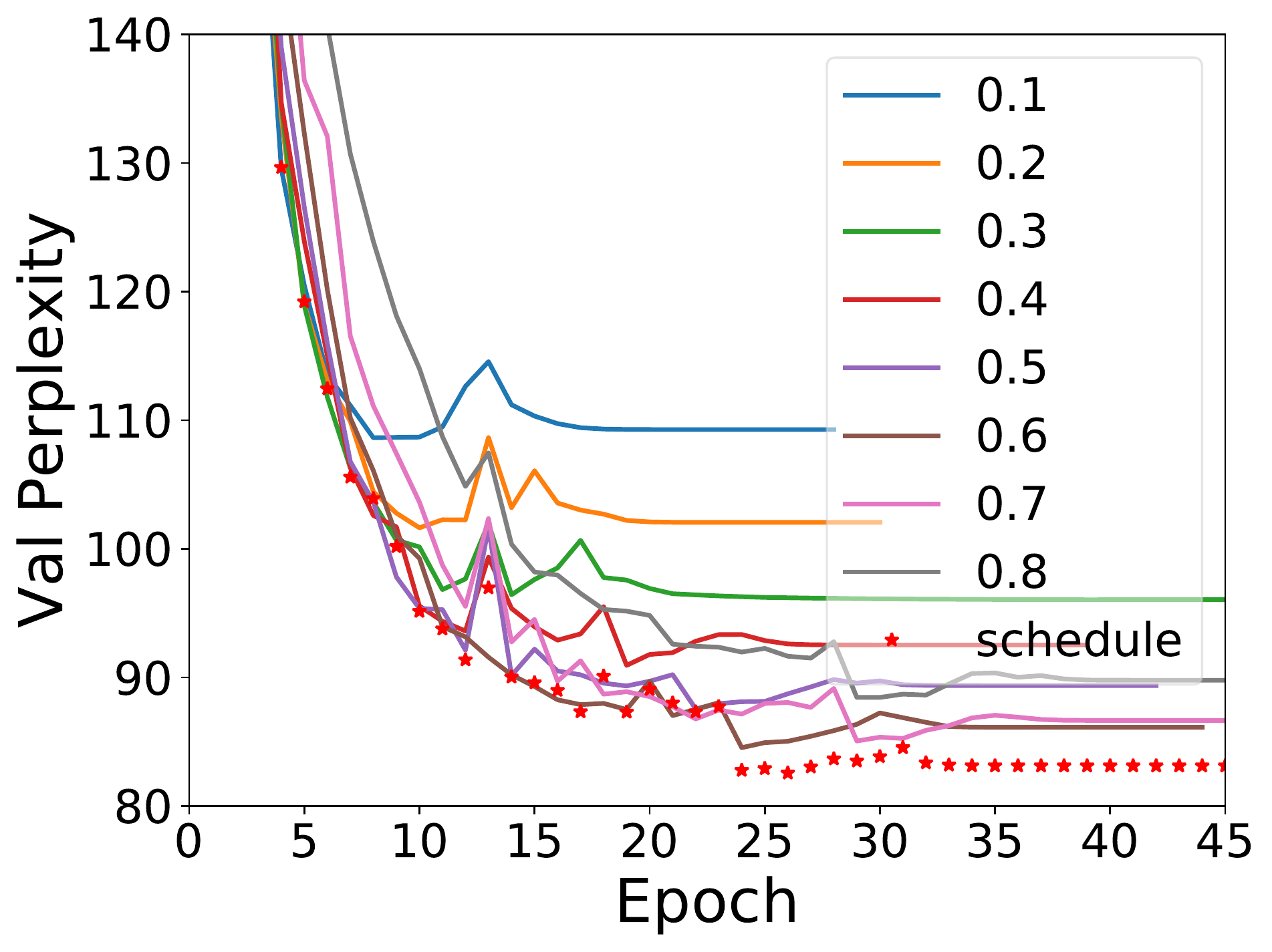}
        \caption{Comparison of fixed output dropout values and the dropout schedule derived from grid searches}
    \end{subfigure}
    \caption{\textbf{Grid search-derived schedule for output dropout.}
    }
    \label{fig:grid-dropouto-s}
\end{figure}

Figure~\ref{fig:stn-dropouto-schedule-periodic} shows the perturbed values for output dropout we used to investigate whether the improved performance yielded by STNs is due to the regularization effect, and not the schedule, in Section~\ref{sec:hyperparam-schedules}.

\begin{figure}[h]
    \centering
    \begin{subfigure}[b]{0.32\textwidth}
        \includegraphics[width=\textwidth]{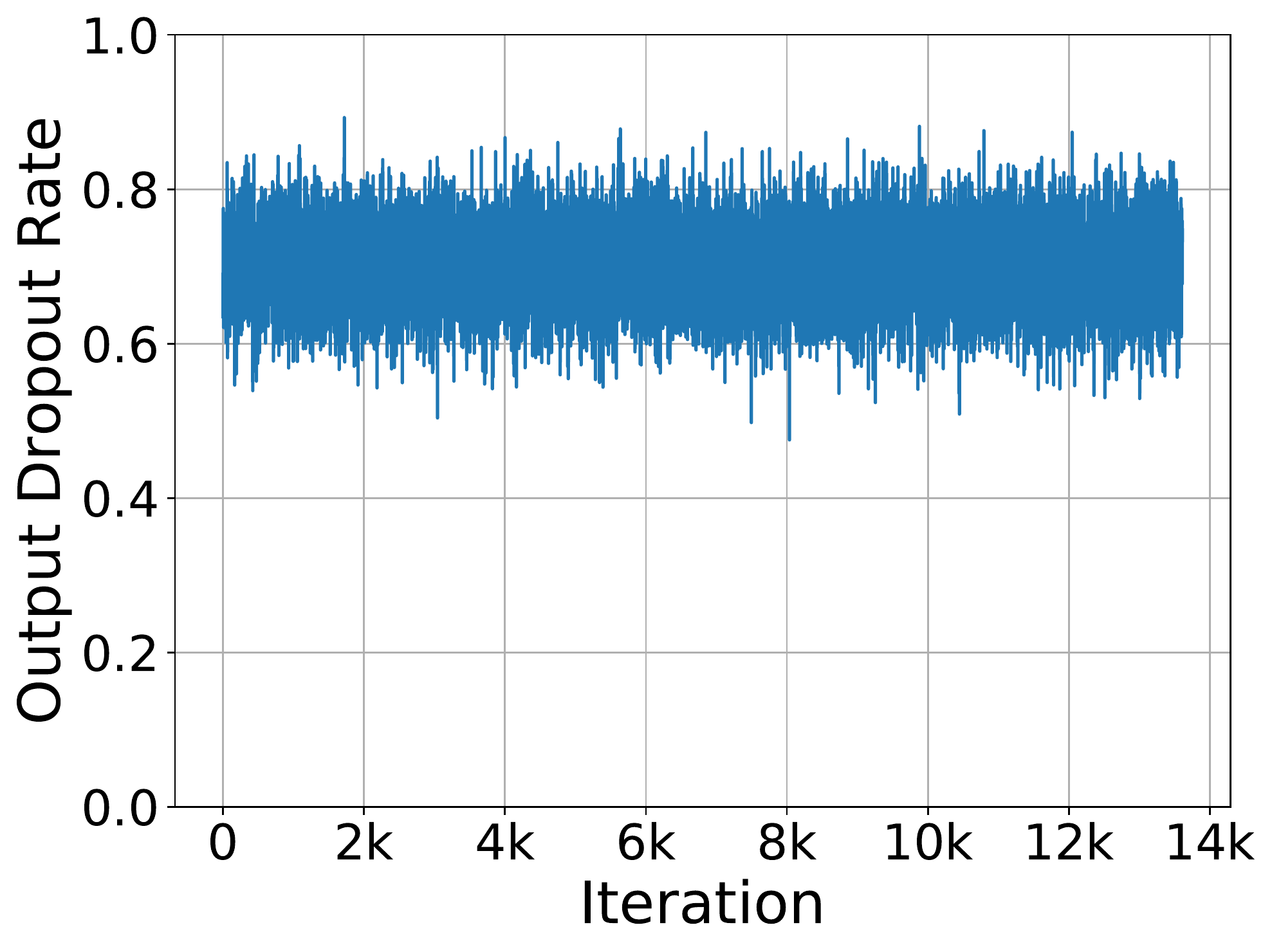}
        \caption{$p \sim \mathcal{N}(0.68, 0.05)$}
        \label{fig:gull}
    \end{subfigure}
    \begin{subfigure}[b]{0.32\textwidth}
        \includegraphics[width=\textwidth]{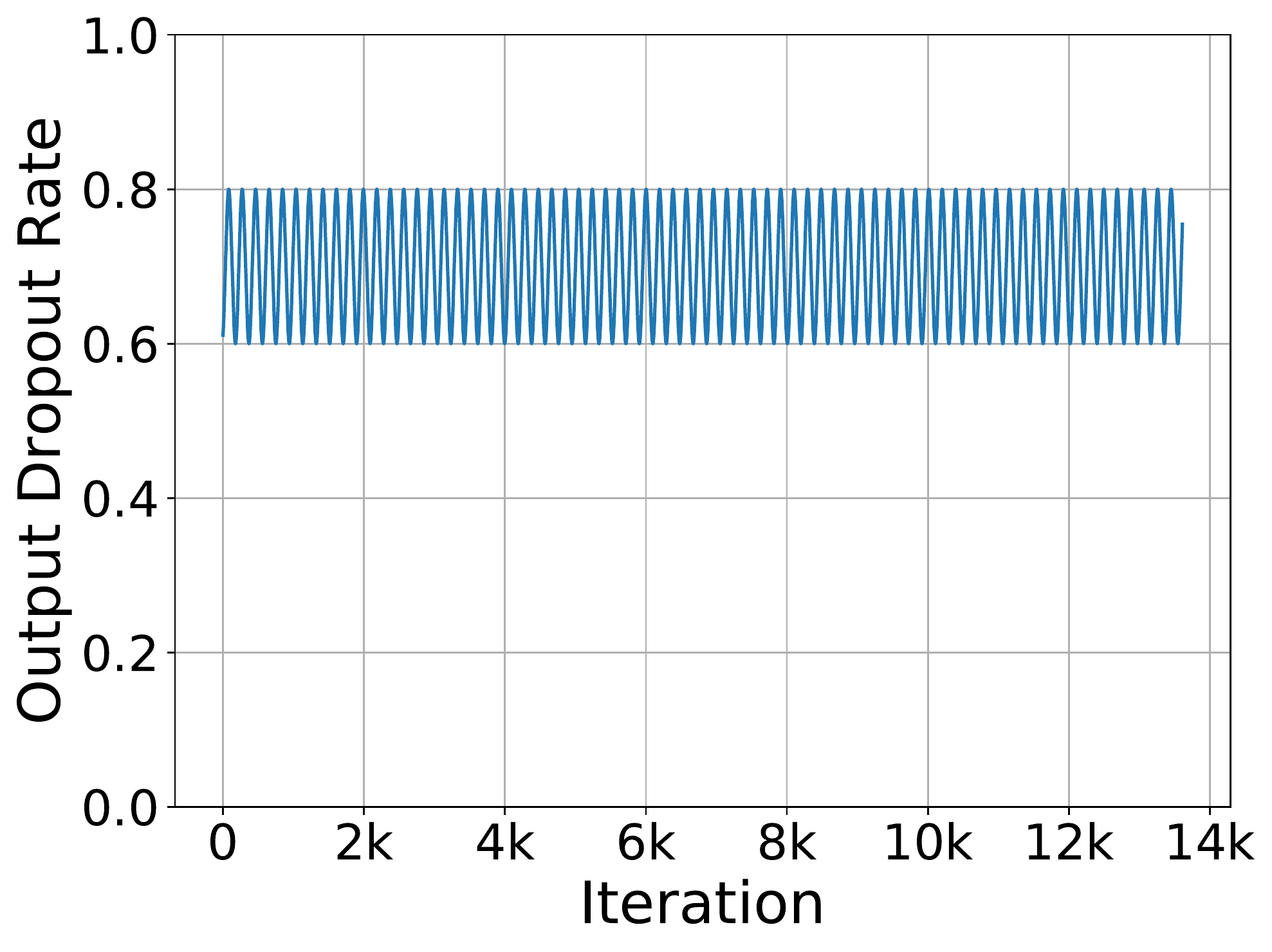}
        \caption{$p = 0.68$ with sin noise}
    \end{subfigure}
    \begin{subfigure}[b]{0.315\textwidth}
        \includegraphics[width=\textwidth]{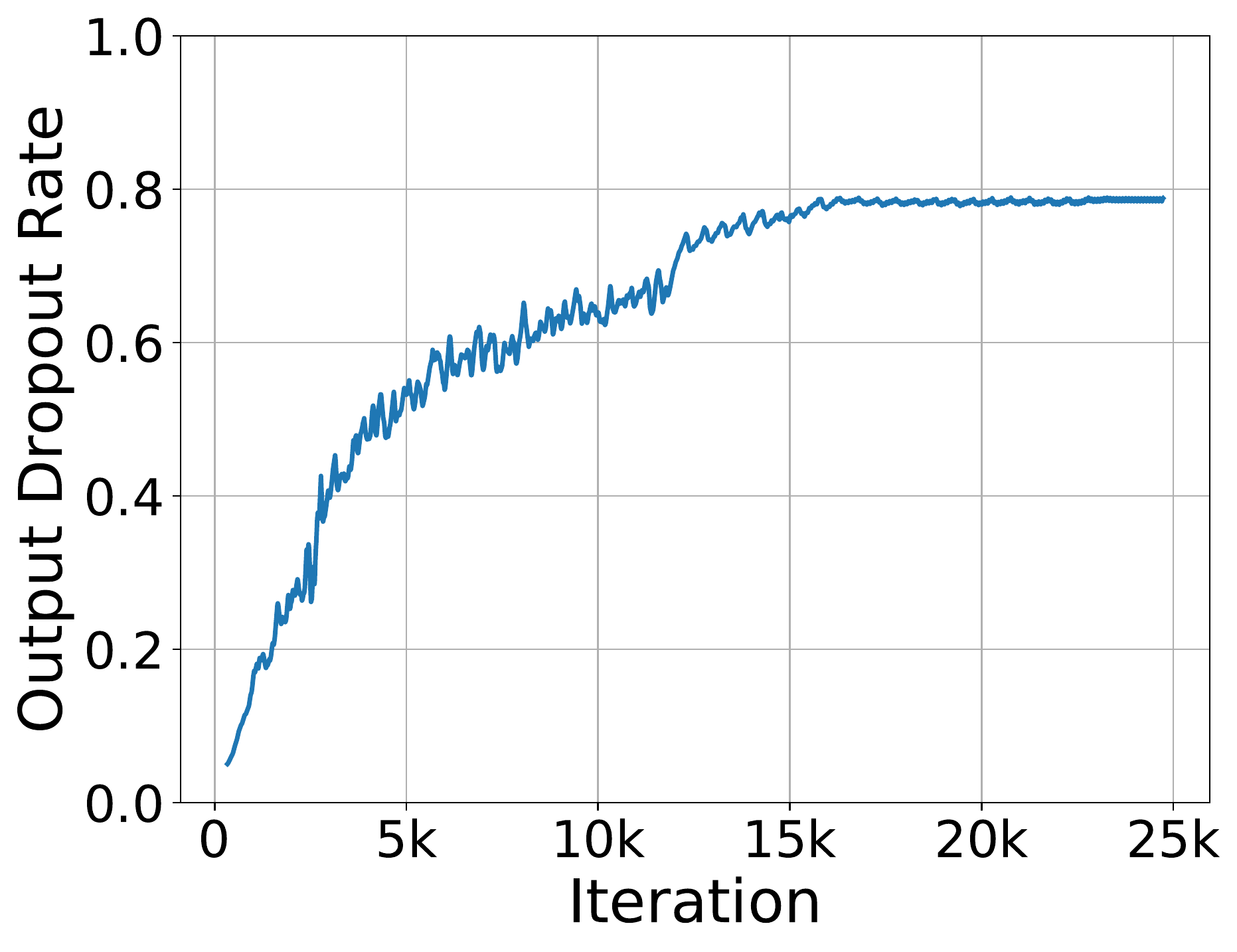}
        \caption{ST-LSTM}
    \end{subfigure}
    \caption{\textbf{Comparison of output dropout schedules.} \textbf{(a)} Gaussian-perturbed output dropout rates around the best value found by grid search, 0.68; \textbf{(b)} sinusoid-perturbed output dropout rates with amplitude 0.1 and a period of 1200 mini-batches; \textbf{(c)} the output dropout schedule found by the ST-LSTM.}
    \label{fig:stn-dropouto-schedule-periodic}
\end{figure}

\pagebreak
\section{Code Listings}
\label{app:code}

In this section, we provide PyTorch code listings for the approximate best-response layers used to construct ST-LSTMs and ST-CNNs: the \texttt{HyperLinear} and \texttt{HyperConv2D} classes. We also provide a simplified version of the optimization steps used on the training set and validation set.

\begin{lstlisting}[language=Python, caption={HyperLinear, used as a drop-in replacement for Linear modules}]
class HyperLinear(nn.Module):

    def __init__(self, input_dim, output_dim, n_hparams):
        super(HyperLinear, self).__init__()

        self.input_dim = input_dim
        self.output_dim = output_dim
        self.n_hparams = n_hparams
        self.n_scalars = output_dim

        self.elem_w = nn.Parameter(torch.Tensor(output_dim, input_dim))
        self.elem_b = nn.Parameter(torch.Tensor(output_dim))

        self.hnet_w = nn.Parameter(torch.Tensor(output_dim, input_dim))
        self.hnet_b = nn.Parameter(torch.Tensor(output_dim))

        self.htensor_to_scalars = nn.Linear(self.n_hparams, self.n_scalars*2, bias=False)

        self.init_params()

    def forward(self, input, hnet_tensor):
        output = F.linear(input, self.elem_w, self.elem_b)

        if hnet_tensor is not None:
            hnet_scalars = self.htensor_to_scalars(hnet_tensor)
            hnet_wscalars = hnet_scalars[:, :self.n_scalars]
            hnet_bscalars = hnet_scalars[:, self.n_scalars:]

            hnet_out = hnet_wscalars * F.linear(input, self.hnet_w)
            hnet_out += hnet_bscalars * self.hnet_b

            output += hnet_out

        return output
\end{lstlisting}

\begin{lstlisting}[language=Python, caption={HyperConv2d, used as a drop-in replacement for Conv2d modules}]
class HyperConv2d(nn.Module):

    def __init__(self, in_channels, out_channels, kernel_size, padding, num_hparams,
        stride=1, bias=True):
        super(HyperConv2d, self).__init__()

        self.in_channels = in_channels
        self.out_channels = out_channels
        self.kernel_size = kernel_size
        self.padding = padding
        self.num_hparams = num_hparams
        self.stride = stride

        self.elem_weight = nn.Parameter(torch.Tensor(
            out_channels, in_channels, kernel_size, kernel_size))
        self.hnet_weight = nn.Parameter(torch.Tensor(
            out_channels, in_channels, kernel_size, kernel_size))
        if bias:
            self.elem_bias = nn.Parameter(torch.Tensor(out_channels))
            self.hnet_bias = nn.Parameter(torch.Tensor(out_channels))
        else:
            self.register_parameter('elem_bias', None)
            self.register_parameter('hnet_bias', None)

        self.htensor_to_scalars = nn.Linear(
            self.num_hparams, self.out_channels*2, bias=False)
        self.elem_scalar = nn.Parameter(torch.ones(1))

        self.init_params()

    def forward(self, input, htensor):
        """
        Arguments:
            input (tensor): size should be (B, C, H, W)
            htensor (tensor): size should be (B, D)
        """
        output = F.conv2d(input, self.elem_weight, self.elem_bias, padding=self.padding, 
            stride=self.stride)
        output *= self.elem_scalar
        if htensor is not None:
            hnet_scalars = self.htensor_to_scalars(htensor)
            hnet_wscalars = hnet_scalars[:, :self.out_channels].unsqueeze(2).unsqueeze(2)
            hnet_bscalars = hnet_scalars[:, self.out_channels:]

            hnet_out = F.conv2d(input, self.hnet_weight, padding=self.padding, 
                stride=self.stride)
            hnet_out *= hnet_wscalars
            if self.hnet_bias is not None:
                hnet_out += (hnet_bscalars * self.hnet_bias).unsqueeze(2).unsqueeze(2)
            output += hnet_out
        return output

    def init_params(self):
        n = self.in_channels * self.kernel_size * self.kernel_size
        stdv = 1. / math.sqrt(n)
        self.elem_weight.data.uniform_(-stdv, stdv)
        self.hnet_weight.data.uniform_(-stdv, stdv)
        if self.elem_bias is not None:
            self.elem_bias.data.uniform_(-stdv, stdv)
            self.hnet_bias.data.uniform_(-stdv, stdv)

        self.htensor_to_scalars.weight.data.normal_(std=0.01)
\end{lstlisting}

\begin{lstlisting}[language=Python, caption={Stylized optimization step on the training set for updating elementary parameters}]
# Perturb hyperparameters around current value in unconstrained
# parametrization.
batch_htensor = perturb(htensor, hscale)

# Apply necessary reparametrization of hyperparameters.
hparam_tensor = hparam_transform(batch_htensor)

# Sets data augmentation hyperparameters in the data loader.
dataset.set_hparams(hparam_tensor)

# Get next batch of examples and apply any input transformation 
# (e.g. input dropout) as dictated by the hyperparameters.
images, labels = next_batch(dataset)
images = apply_input_transform(images, hparam_tensor)

# Run everything through the model and do gradient descent. 
pred = hyper_cnn(images, batch_htensor, hparam_tensor)
xentropy_loss = F.cross_entropy(pred, labels)
xentropy_loss.backward()
cnn_optimizer.step()
\end{lstlisting}

\begin{lstlisting}[language=Python, caption={Stylized optimization step on the validation set for updating hyperparameters/noise scale}]
# Perturb hyperparameters around current value in unconstrained
# parametrization, so we can assess sensitivity of validation 
# loss to the scale of the noise. 
batch_htensor = perturb(htensor, hscale)

# Apply necessary reparametrization of hyperparameters.
hparam_tensor = hparam_transform(batch_htensor)

# Get next batch of examples and run through the model.
images, labels = next_batch(valid_dataset)
pred = hyper_cnn(images, batch_htensor, hparam_tensor)
xentropy_loss = F.cross_entropy(pred, labels)

# Add extra entropy weight term to loss.
entropy = compute_entropy(hscale)
loss = xentropy_loss - args.entropy_weight * entropy
loss.backward()

# Tune the hyperparameters.
hyper_optimizer.step()

# Tune the scale of the noise applied to hyperparameters.
scale_optimizer.step()
\end{lstlisting}

\section{Sensitivity Studies}
\label{app:ablation}

In this section, we present experiments that show how sensitive our STN models are to different meta-parameters.

In particular, we investigate the effect of using alternative schedules (Figure~\ref{fig:train-val-steps}) for the number of optimization steps performed on the training and validation sets.

\begin{figure}[h]
    \centering
    \begin{subfigure}[b]{0.465\textwidth}
        \includegraphics[width=\textwidth]{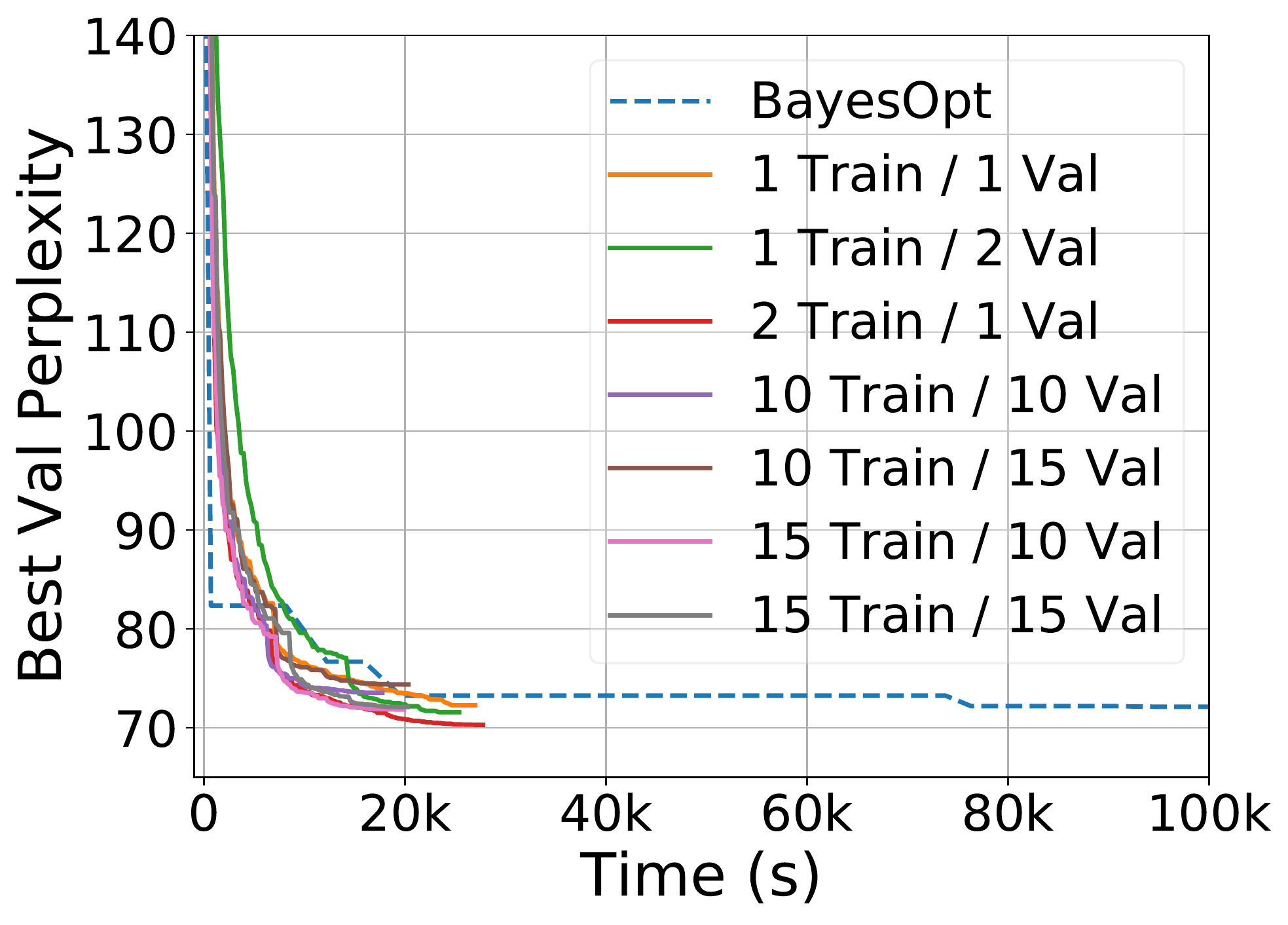}
        \caption{LSTM Train \& Valid Step Schedules}
    \end{subfigure}
    \hfill
    \begin{subfigure}[b]{0.45\textwidth}
        \includegraphics[width=\textwidth]{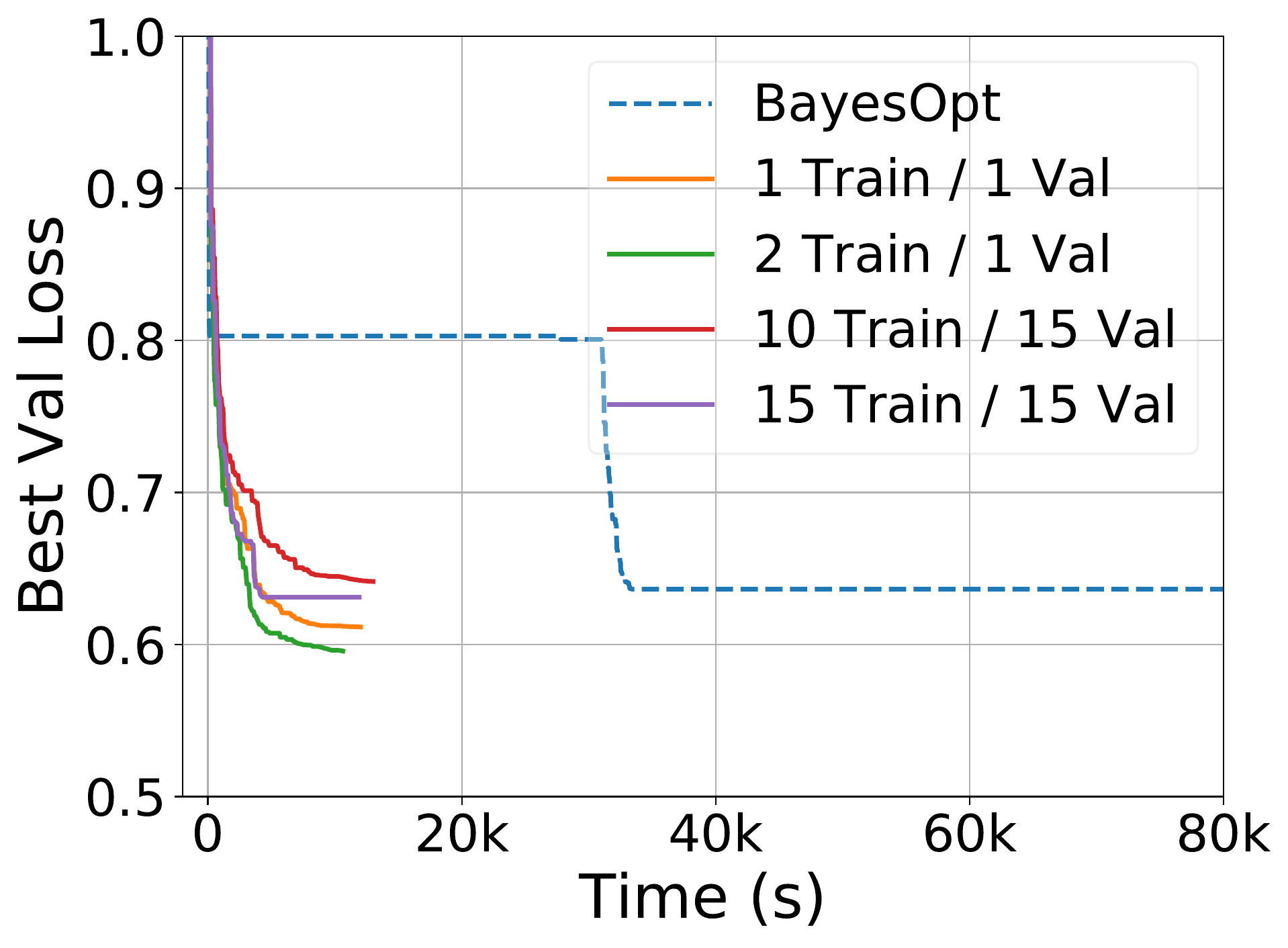}
        \caption{CNN Train \& Valid Step Schedules}
    \end{subfigure}
    \caption{\textbf{The effect of using a different number of train/val steps.}
    For the CNN, we include Bayesian Optimization and the reported STN parameters for comparison.
    During these experiments we found schedules which achieve better final loss with CNNs.}
    \label{fig:train-val-steps}
\end{figure}

Additionally, we investigate the effect of using different initial perturbation scales for the hyperparameters, which are either fixed or tuned (Figure~\ref{fig:scale-ablation}).
\begin{figure}[h]
    \centering
    \begin{subfigure}[b]{0.465\textwidth}
        \includegraphics[width=\textwidth]{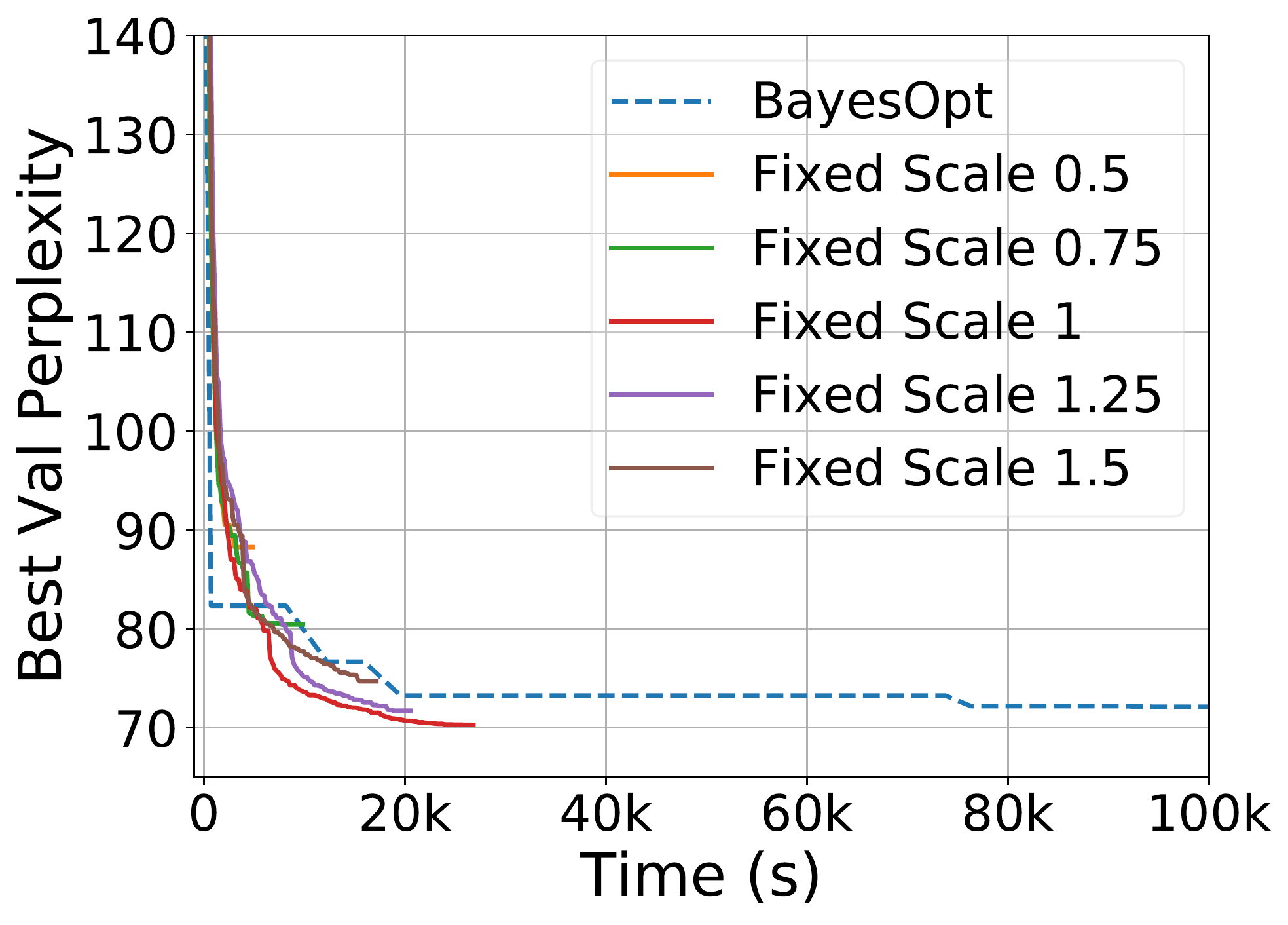}
        \caption{LSTM Scales}
    \end{subfigure}
    \hfill
    \begin{subfigure}[b]{0.45\textwidth}
        \includegraphics[width=\textwidth]{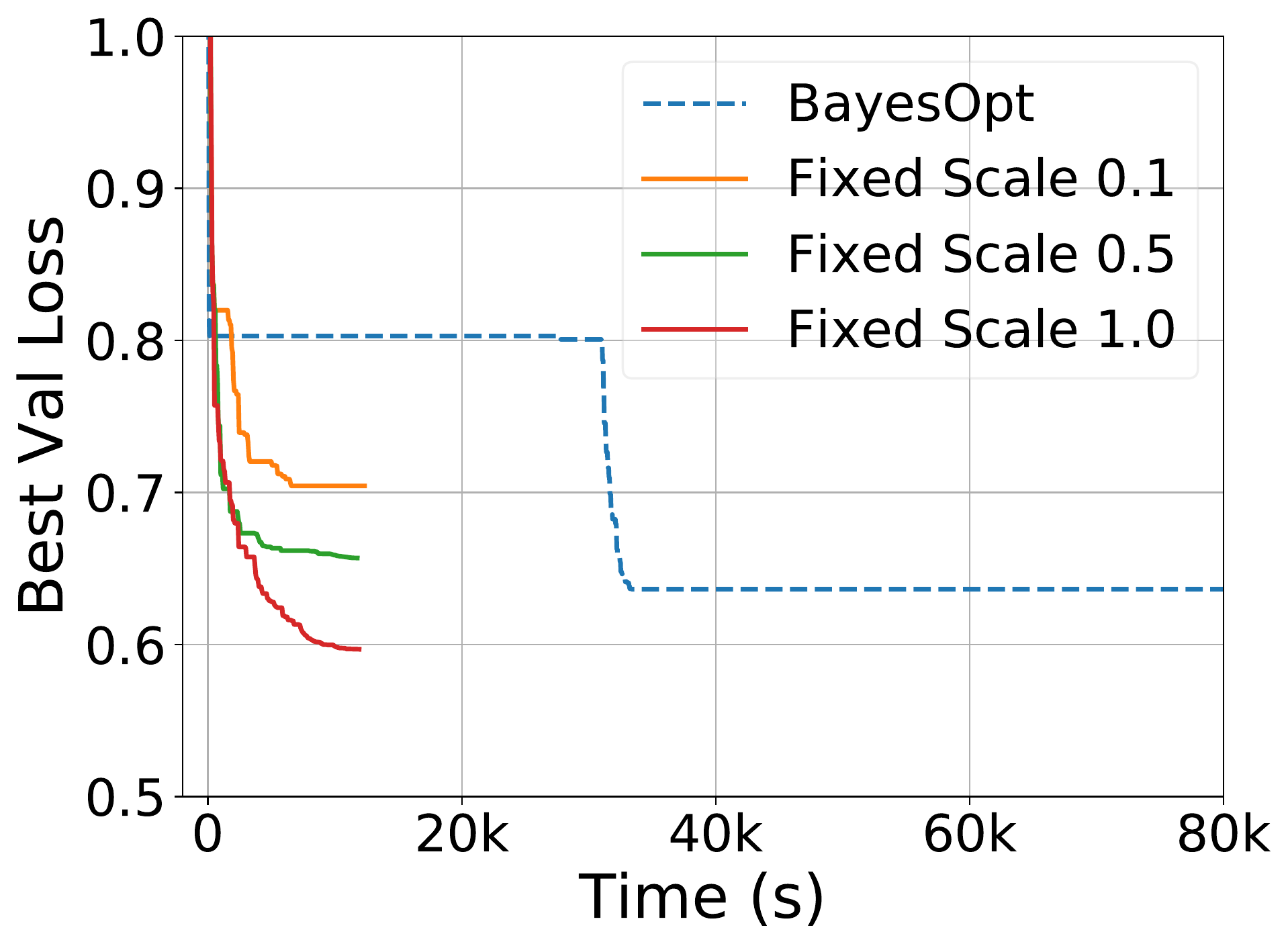}
        \caption{CNN Scales}
    \end{subfigure}
    \caption{\textbf{The effect of using different perturbation scales.}
    For the CNN, we include Bayesian Optimization and the reported STN parameters for comparison. For (a), the perturbation scales are fixed.}
    \label{fig:scale-ablation}
\end{figure}

\end{document}